\date{\vspace{-5ex}}
\title{On the curvature of the loss landscape}
\date{} 					
\author{{Alison Pouplin} \\
	Technical University of Denmark\\
	\texttt{alpu@dtu.dk} \\
        \And
	{Hrittik Roy} \\
	Technical University of Denmark \\
	\And
	{Sidak Pal Singh} \\
	ETH Zurich, Switzerland \\
        \And
	{Georgios Arvanitidis} \\
	Technical University of Denmark \\
}
\begin{document}

\maketitle
\begin{abstract}
    One of the main challenges in modern deep learning is to understand why such over-parameterized models perform so well when trained on finite data.
    A way to analyze this generalization concept is through the properties of the associated loss landscape.
    In this work, we consider the loss landscape as an embedded Riemannian manifold and show that the differential geometric properties of the manifold can be used when analyzing the generalization abilities of a deep net.
    In particular, we focus on the scalar curvature, which can be computed analytically for our manifold, and show connections to several settings that potentially imply generalization.
\end{abstract}

\section{Flatness and generalization in machine learning}

The relationship between the generalization ability of a model and the flatness of its loss landscape has been a subject of interest in machine learning. \textbf{Flatness} refers to the shape of the hypersurface representing the loss function, parameterized by the parameters of the model. Flat minima are characterized by a wide and shallow basin. \textbf{Generalization} refers to the ability of a model to perform well on unseen data. A widely accepted hypothesis, proposed by various research groups \parencite{hochreiter1997flat, hinton1993keeping, buntine1991bayesian} several decades ago, suggests that flat minima are associated with better generalization compared to sharp minima. The basis of this hypothesis stems from the observation that when the minima of the optimization landscape are flatter, it enables the utilization of weights with lower precision. This, in turn, has the potential to improve the robustness of the model. \\

\begin{minipage}{\textwidth}
  \begin{minipage}{0.6\textwidth}
    \includegraphics[width=\textwidth]{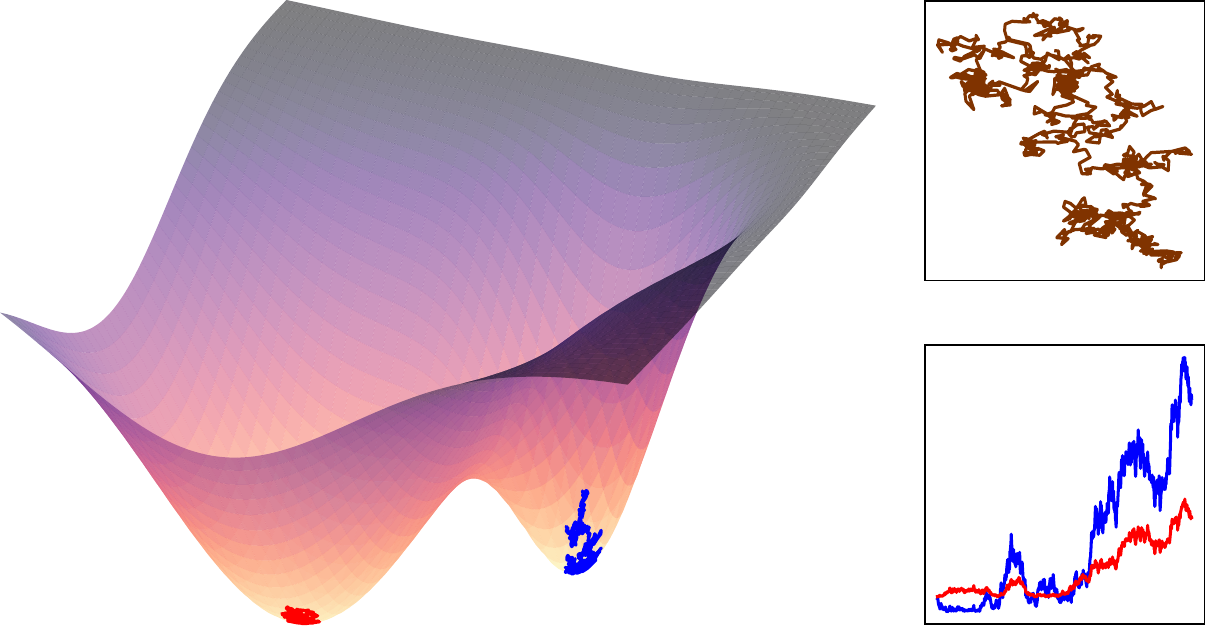}
  \end{minipage}%
  \qquad
  \begin{minipage}{0.4\textwidth}
    \captionsetup{width=\textwidth}
    \captionof{figure}{\textbf{On the left}, a surface represents a loss function $f(\uu, \vv)$ on its parameter space $\{\uu, \vv\}$. We can see two minima, a sharp minima and a flatter minima. A Brownian motion navigates the parameter space around those two minima, in blue for the sharp one, and red for the shallow one. \textbf{On the right}, the upper figure represents the Brownian motion navigating in the parameter space. The same is used for both minima. The lower figure represents the perturbations of the loss $f$ in both the sharp (blue) and flat (red) minima. The loss is more robust to perturbation in the flatter minima.}
  \end{minipage}
\end{minipage}

The notion of flatness has been challenged by \textcite{dinh2017sharp}, who argued that the different flatness measures proposed are not invariant under reparametrization of the parameter space and questioned the assumption that flatness directly causes generalization. \\

Yet, numerous empirical and theoretical studies have presented compelling evidence that supports the relationship between flatness and enhanced generalization. This relationship has been observed in various contexts, by averaging weights \parencite{izmailov2018averaging}, studying inductive biases \parencite{neyshabur2017geometry, imaizumi2022generalization}, introducing different noise in gradient descent \parencite{chaudhari2019entropy, pittorino2021entropic}, adopting smaller batch sizes \parencite{keskar2016large}, and investigating ReLU Neural networks \parencite{yi2019positively}. 

The exact relationship between flatness and generalization is still an open problem in machine learning. In this preliminary work, we build upon the \textit{flatness hypothesis} as a primary motivation to investigate the curvature of the loss landscape, approaching it from a differential geometric perspective. \\

\textbf{In this preliminary work}, we analyze the loss landscape as a Riemannian manifold and derive its \textit{scalar curvature}, an intrinsic Riemannian object that characterizes the local curvature of the manifold. We found that the scalar curvature, at minima, has a straightforward expression and can be related to the norm of the Hessian. While the norm of the Hessian may not always accurately measure flatness, it remains a valuable indicator for understanding optimization. Our findings demonstrate that the scalar curvature possesses all the benefits of the Hessian norm without its limitations.

\section{Geometry of the loss landscape and curvature}

We are interested in finding the parameters $\xx$ of a model that minimizes the loss function denoted $f$. The loss function is a smooth function defined on the parameter space $\MM\subset\RR^{q}$, where $q$ is the number of parameters. In order to study the loss landscape of a model, we can look at the geometry of the graph of the loss function, which is a hypersurface embedded in $\RR^{q+1}$.

\begin{definition}[\textbf{Metric of a graph}]
    Let $f:\Omega\subset\RR^{q} \to \RR$ be a smooth function. We call \textbf{graph of a function} the set: 
    \[\Gamma_f=\{(\xx, y) \in \Omega \times \RR  \mid y = f(\xx)\}.\]
    
    The graph $\Gamma_f$ is an topological smooth manifold embedded in $\R^{q+1}$, and it is isometric to the Riemannian manifold $(\MM, g)$ with $\MM\subset\RR^{q}$ and the induced metric 
    \begin{equation}
    \label{eq:riemannian-metric}
        g_{ij} = \delta_{ij} + \partial_i f \partial_j f.
    \end{equation}
\end{definition}

the metric is obtained by pulling back, in one case, the loss function to the parameter space ($\partial_i f \partial_j f$), and in another case, the parameter space to itself ($\delta_{ij}$), \parencite{lee2018introduction}. 

Instead of working in the ambient space $\RR^{q+1}$, it is more convenient to study the intrinsic geometry of the loss function in the parameter space ($\MM$, $g$). In particular, knowing the Riemannian metric, we can compute the associated geometric quantities of the loss landscape as the Christoffel symbols, the Riemannian curvature tensor, and the scalar curvature (See Appendix~\ref{appendix:definition} for an introduction of those quantities).  In the following, we will denote $\nabla$ the Euclidean gradient operator of the loss function $f$, and $\Hess$ the Euclidean Hessian of $f$. 

\begin{tabular}{l l}
    Gradient($f$): & $(\nabla f)_{i} = \J_i= \partial_i f = f_{,i}$\\
    Hessian($f$): & $(\Hess)_{ij} = \partial_i \partial_j f = f_{,ij}$
\end{tabular}

\subsection*{Curvature in Riemannian geometry}
The Christoffel symbols define a corrective term used to compute covariant derivatives in a curved space. They can be derived from the Riemannian metric. 
\begin{proposition}[\textbf{Christoffel symbols}]
The Christoffel symbols are given by:  
\begin{equation}
    \Gamma^{i}_{kl} = \beta f_{,i} f_{,kl},
\end{equation}
with $\beta = (1+\norm{\nabla f}^2)^{-1}$.
\end{proposition}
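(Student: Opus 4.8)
The plan is to apply the standard coordinate expression for the Levi--Civita connection of $(\MM,g)$,
\[
\Gamma^{i}_{kl} = \frac{1}{2}\, g^{im}\bigl(\partial_k g_{ml} + \partial_l g_{mk} - \partial_m g_{kl}\bigr),
\]
to the induced metric $g_{ij} = \delta_{ij} + f_{,i} f_{,j}$ and then simplify. Carrying this out requires exactly two ingredients: a closed form for the inverse metric $g^{im}$, and the partial derivatives of the components $g_{ij}$.

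First I would compute the inverse. Since $g = I_q + \nabla f\,(\nabla f)^{\top}$ is a rank-one perturbation of the identity, the Sherman--Morrison formula yields
\[
g^{im} = \delta_{im} - \frac{f_{,i}\, f_{,m}}{1 + \norm{\nabla f}^2} = \delta_{im} - \beta\, f_{,i} f_{,m},
\]
which can be verified directly by checking $g^{im}g_{mk} = \delta_{ik}$. Next, since the $\delta_{ij}$ part is constant, differentiating $g_{ml} = \delta_{ml} + f_{,m} f_{,l}$ gives $\partial_k g_{ml} = f_{,mk} f_{,l} + f_{,m} f_{,lk}$, and similarly for $\partial_l g_{mk}$ and $\partial_m g_{kl}$.

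The key step is then to substitute these into the bracketed symmetrization. Using symmetry of the Hessian $f_{,mk} = f_{,km}$, all contributions carrying a lone $f_{,k}$ or $f_{,l}$ factor cancel in pairs, while the two terms of the form $f_{,m} f_{,kl}$ add, so the bracket collapses to $2\, f_{,m} f_{,kl}$. Hence $\Gamma^{i}_{kl} = g^{im} f_{,m} f_{,kl}$. It then only remains to contract the inverse metric with the gradient, $g^{im} f_{,m} = f_{,i} - \beta\, f_{,i}\norm{\nabla f}^2 = f_{,i}\bigl(1 - \beta\norm{\nabla f}^2\bigr)$, and to observe that $1 - \beta\norm{\nabla f}^2 = \beta$ directly from $\beta = (1+\norm{\nabla f}^2)^{-1}$. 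This gives $\Gamma^{i}_{kl} = \beta\, f_{,i} f_{,kl}$, as claimed.

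The computation is mostly routine, so I expect the only real pitfall to be the index bookkeeping in the three-term symmetrization: one has to track carefully which Hessian-times-gradient products genuinely cancel by Hessian symmetry and which ones double up, since a single sign error there would destroy the clean reduction to a single term. The other point worth writing out explicitly rather than asserting is the Sherman--Morrison inverse together with the small identity $1 - \beta\norm{\nabla f}^2 = \beta$, which is precisely what makes the final expression so compact.
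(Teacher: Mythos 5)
Your proposal is correct and follows essentially the same route as the paper's proof: the Levi--Civita formula, the Sherman--Morrison inverse $g^{im} = \delta_{im} - \beta f_{,i} f_{,m}$, the collapse of the three-term symmetrization to $2 f_{,m} f_{,kl}$ by Hessian symmetry, and the final contraction using $1 - \beta\norm{\nabla f}^2 = \beta$. No gaps.
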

\begin{proof}
See Appendix~\ref{appendix:proofs}.
\end{proof}

Using those Christoffel symbols, we can directly compute the Riemannian curvature tensor. Using the Einstein summation convention, the Riemannian curvature tensor is an intrinsic mathematical object that characterizes the deviation of the curved manifold from the flat Euclidean manifold.
\begin{proposition}[\textbf{Riemannian curvature tensor}]
    The Riemannian curvature tensor is given by: 
    \begin{equation}
        \R^i_{jkm} = \beta (f_{,ik}f_{,jm} - f_{,jm}f_{,jk}) - \beta^2 f_{,i}f_{,r} (f_{,rk}f_{,im} - f_{,rm}f_{,jk}),
    \end{equation}
    with $\beta = (1+\norm{\nabla f}^2)^{-1}$.
\end{proposition}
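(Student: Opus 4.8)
The plan is to start from the standard coordinate expression of the Riemann curvature tensor in terms of the Christoffel symbols,
\[
\R^i_{jkm} = \partial_k \Gamma^i_{jm} - \partial_m \Gamma^i_{jk} + \Gamma^i_{kl}\Gamma^l_{jm} - \Gamma^i_{ml}\Gamma^l_{jk},
\]
and substitute the expression $\Gamma^i_{kl} = \beta f_{,i} f_{,kl}$ from the previous proposition. After that, the whole argument is a bookkeeping exercise in the product rule and index contractions.

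First I would differentiate the scalar $\beta = (1+\norm{\nabla f}^2)^{-1}$: since $\partial_k \norm{\nabla f}^2 = 2 f_{,r} f_{,rk}$, this gives $\partial_k \beta = -2\beta^2 f_{,r} f_{,rk}$. Applying the product rule to $\partial_k \Gamma^i_{jm} = \partial_k(\beta f_{,i} f_{,jm})$ then produces three terms: one carrying $\partial_k \beta$, one carrying the second derivative $f_{,ik}$, and one carrying the third derivative $f_{,jmk}$; the analogous expansion holds for $\partial_m \Gamma^i_{jk}$. The key simplification is that the two third-derivative contributions are $\beta f_{,i} f_{,jmk}$ and $\beta f_{,i} f_{,jkm}$, which cancel by symmetry of mixed partials, so no third derivatives survive in the final answer.

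Next I would expand the two quadratic terms, writing $\Gamma^i_{kl}\Gamma^l_{jm} = \beta^2 f_{,i} f_{,jm}\,(f_{,l} f_{,kl})$ and similarly for $\Gamma^i_{ml}\Gamma^l_{jk}$. Collecting everything, the terms built only from second derivatives assemble into $\beta\,(f_{,ik} f_{,jm} - f_{,im} f_{,jk})$, while every remaining contribution carries a common factor $\beta^2 f_{,i} f_{,r}$: two copies with coefficient $-2\beta^2$ coming from $\partial_k\beta$ and $\partial_m\beta$, and two copies with coefficient $+\beta^2$ coming from the quadratic terms (after relabelling the dummy summation index). These merge into $-\beta^2 f_{,i} f_{,r}(f_{,rk} f_{,jm} - f_{,rm} f_{,jk})$, which yields the claimed formula.

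The main obstacle is purely organizational rather than conceptual: there are on the order of a dozen index-laden terms to track, the mixed-partial cancellation must be checked to be exact, and the $-2\beta^2$ and $+\beta^2$ pieces must be merged correctly into a single $-\beta^2$ coefficient. A secondary subtlety worth flagging is index placement: because the metric here is the identity plus a rank-one perturbation, it is cleanest to run the computation keeping careful track of which slots are raised or lowered and to raise the final free index only at the end, which avoids stray sign errors.
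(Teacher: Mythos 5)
Your proposal follows exactly the same route as the paper's appendix computation: substitute $\Gamma^i_{kl}=\beta f_{,i}f_{,kl}$ into the coordinate formula for $\R^i_{jkm}$, use $\partial_k\beta=-2\beta^2 f_{,r}f_{,rk}$, cancel the two third-derivative terms by symmetry of mixed partials, and merge the $-2\beta^2$ pieces from $\partial\beta$ with the $+\beta^2$ pieces from the quadratic Christoffel terms. Your bookkeeping is in fact cleaner than the printed version: the result should read $\beta(f_{,ik}f_{,jm}-f_{,im}f_{,jk})-\beta^2 f_{,i}f_{,r}(f_{,rk}f_{,jm}-f_{,rm}f_{,jk})$, consistent with the Gauss equation for a graph hypersurface, whereas the proposition as stated carries index typos ($f_{,jm}f_{,jk}$ and $f_{,rk}f_{,im}$).
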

\begin{proof}
    See Appendix~\ref{appendix:proofs}.
\end{proof}

While those four-dimensional tensor gives us a complete picture of the curvature of a manifold, it can be difficult to interpret in practice. Instead, a scalar object, the scalar curvature, can be derived from the Riemannian curvature tensor. The scalar curvature quantifies locally how curved is the manifold.  
\begin{proposition}[\textbf{Scalar curvature}] \label{proposition:full_scalar_curvature}
    The scalar curvature is given by: 
    \begin{equation}
        \Sc =  \beta \left(\trace(\Hess)^2 - \trace(\Hess^2)\right) + 2 \beta^2 \left( \nabla f^{\top} (\Hess^2 - \trace(\Hess) \Hess) \nabla f\right),
    \end{equation}
    with $\beta = (1+\norm{\nabla f}^2)^{-1}$.
\end{proposition}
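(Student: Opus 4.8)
The plan is to obtain $\Sc$ as the usual double trace of the curvature tensor of the previous proposition, $\Sc = g^{jm}R^{i}_{jim}$ where $R_{jm}=R^{i}_{jim}$ is the Ricci tensor, so the two ingredients I need are the inverse metric and a disciplined index contraction. For the inverse metric, note that $g_{ij}=\delta_{ij}+f_{,i}f_{,j}$ is a rank-one perturbation of the identity, so the Sherman--Morrison identity gives at once $g^{ij}=\delta_{ij}-\beta f_{,i}f_{,j}$ with $\beta=(1+\norm{\nabla f}^2)^{-1}$; one may equally check $g_{ik}g^{kj}=\delta^{j}_{i}$ directly in one line.

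For the Ricci tensor I would set $k=i$ in $R^{i}_{jkm}$ and sum. Using the symmetry $f_{,ij}=f_{,ji}$ together with the two elementary identities $\sum_{i}f_{,i\ell}f_{,im}=(\Hess^{2})_{\ell m}$ and $\sum_{i}f_{,i}f_{,i\ell}=(\Hess\nabla f)_{\ell}$, the contraction collapses to a linear combination of the three symmetric tensors $f_{,jm}$, $(\Hess^{2})_{jm}$ and $(\Hess\nabla f)_{j}(\Hess\nabla f)_{m}$, with scalar coefficients built from $\trace(\Hess)$, $\nabla f^{\top}\Hess\nabla f$ and powers of $\beta$; concretely one finds $R_{jm}=\beta\,\trace(\Hess)f_{,jm}-\beta(\Hess^{2})_{jm}-\beta^{2}(\nabla f^{\top}\Hess\nabla f)f_{,jm}+\beta^{2}(\Hess\nabla f)_{j}(\Hess\nabla f)_{m}$.

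Contracting once more against $g^{jm}=\delta_{jm}-\beta f_{,j}f_{,m}$ finishes the computation. The $\delta_{jm}$ part sends $f_{,jm}\mapsto\trace(\Hess)$, $(\Hess^{2})_{jm}\mapsto\trace(\Hess^{2})$ and $(\Hess\nabla f)_{j}(\Hess\nabla f)_{m}\mapsto\nabla f^{\top}\Hess^{2}\nabla f$, which already produces the leading $\beta(\trace(\Hess)^{2}-\trace(\Hess^{2}))$ together with two $\beta^{2}$ terms; the $-\beta f_{,j}f_{,m}$ part sends the same three tensors to $\nabla f^{\top}\Hess\nabla f$, $\nabla f^{\top}\Hess^{2}\nabla f$ and $(\nabla f^{\top}\Hess\nabla f)^{2}$. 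The decisive observation is that the two copies of $(\nabla f^{\top}\Hess\nabla f)^{2}$ occur with opposite signs and cancel, after which the surviving $\beta^{2}$ terms regroup exactly into $2\beta^{2}\,\nabla f^{\top}(\Hess^{2}-\trace(\Hess)\Hess)\nabla f$, giving the claimed formula.

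The main obstacle is organizational rather than conceptual: each contraction roughly doubles the term count, so one must keep applying the index symmetries and fold every product back into the four scalar invariants $\trace(\Hess)$, $\trace(\Hess^{2})$, $\nabla f^{\top}\Hess\nabla f$, $\nabla f^{\top}\Hess^{2}\nabla f$, and in particular catch the cancellation above. A shortcut that removes most of this algebra is to bypass the Christoffel and curvature-tensor computations and use the Gauss equation for the graph hypersurface: its unit normal is $N=\sqrt{\beta}\,(-\nabla f,\,1)\in\RR^{q+1}$, its second fundamental form is $h_{ij}=\sqrt{\beta}\,f_{,ij}$, the Gauss equation reads $R_{ijk\ell}=h_{ik}h_{j\ell}-h_{i\ell}h_{jk}$, and therefore $\Sc=(\trace_{g}h)^{2}-\norm{h}_{g}^{2}$ with the trace and norm taken in $g$; expanding these $g$-traces with $g^{ij}=\delta_{ij}-\beta f_{,i}f_{,j}$ reproduces the same expression with considerably less bookkeeping.
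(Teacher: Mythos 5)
Your main argument is correct and is essentially the paper's own proof: you contract the Riemann tensor once to get the Ricci tensor, invert the rank-one-perturbed metric via Sherman--Morrison to obtain $g^{ij}=\delta_{ij}-\beta f_{,i}f_{,j}$, and contract again, with the two $\beta^3(\nabla f^{\top}\Hess\nabla f)^2$ terms cancelling exactly as in the paper's appendix; your intermediate Ricci tensor matches theirs term for term. The Gauss-equation shortcut you append is a genuinely different and cleaner route that the paper does not take: reading off the second fundamental form $h_{ij}=\sqrt{\beta}\,f_{,ij}$ of the graph hypersurface and using $\Sc=(\trace_g h)^2-\norm{h}_g^2$ replaces the Christoffel/Riemann bookkeeping with two short $g$-traces and makes the structure of the final formula (nuclear-minus-Frobenius at critical points) transparent; its only cost is invoking the Gauss equation as an external fact rather than deriving everything from the metric.
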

\begin{proof}
    See Appendix~\ref{appendix:proofs}.
\end{proof}

This expression simplifies when the gradient is zero, which corresponds to a critical point of the loss function.
In this case, the scalar curvature is given by:
\begin{corollary}\label{cor:scalar_curvature}
    When an extremum is reached ($\nabla f=0$), the scalar curvature becomes: 
    \begin{equation}
        \Sc(\xx_{\text{min}}) =  \trace(\Hess)^2 - \trace(\Hess^2)
    \end{equation} 
\end{corollary}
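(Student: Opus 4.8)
The plan is to derive this directly from Proposition~\ref{proposition:full_scalar_curvature} by specializing the general formula to a critical point. Recall that the full expression reads
\[
\Sc =  \beta \left(\trace(\Hess)^2 - \trace(\Hess^2)\right) + 2 \beta^2 \left( \nabla f^{\top} (\Hess^2 - \trace(\Hess) \Hess) \nabla f\right),
\]
with $\beta = (1+\norm{\nabla f}^2)^{-1}$, so everything reduces to tracking how each factor behaves as $\nabla f \to 0$.

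First I would evaluate $\beta$ at $\xx_{\text{min}}$. Since $\nabla f(\xx_{\text{min}}) = 0$, we have $\norm{\nabla f(\xx_{\text{min}})}^2 = 0$, hence $\beta = (1+0)^{-1} = 1$. This collapses the coefficient of the first term to unity, giving the leading contribution $\trace(\Hess)^2 - \trace(\Hess^2)$, where $\Hess$ is now understood to be evaluated at $\xx_{\text{min}}$.

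Next I would dispose of the second term. The factor $\nabla f^{\top} (\Hess^2 - \trace(\Hess) \Hess) \nabla f$ is a quadratic form in $\nabla f$; since the vector $\nabla f(\xx_{\text{min}})$ vanishes identically, this entire term is zero regardless of the (bounded) matrix sandwiched between the gradients, and the prefactor $2\beta^2 = 2$ is finite. Combining the two observations yields $\Sc(\xx_{\text{min}}) = \trace(\Hess)^2 - \trace(\Hess^2)$, which is the claim.

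There is essentially no obstacle here: the statement is an immediate substitution into the already-proven Proposition~\ref{proposition:full_scalar_curvature}, and the only point worth stating explicitly is that $\beta = 1$ exactly at a critical point (not merely approximately), so no limiting argument is needed. One could optionally remark that the result holds at \emph{any} critical point (minimum, maximum, or saddle), since only $\nabla f = 0$ was used and no definiteness assumption on $\Hess$ enters.
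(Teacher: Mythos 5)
Your proof is correct and follows exactly the paper's own argument: substitute $\nabla f = 0$ into Proposition~\ref{proposition:full_scalar_curvature}, note that $\beta = 1$ and that the quadratic form in $\nabla f$ vanishes. The remark that the result holds at any critical point, not just minima, is a fair (and accurate) addition but does not change the substance.
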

\begin{proof}
    This is a direct result of Proposition~\ref{proposition:full_scalar_curvature}, when $\nabla f = 0$.
\end{proof}

Note that we can also write, at the minimum, $\Sc(\xx_{\text{min}}) = \norm{\Hess}_{*}^2 - \norm{\Hess}_F^2$, with $\norm{\cdot}_{*}$ the nuclear norm and $\norm{\cdot}_F$ the Frobenius norm. 

\subsection*{The scalar curvature as the deviation of the volume of geodesic balls}
This scalar curvature has a simple interpretation, as it corresponds to the difference in volume between a geodesic ball embedded in the Riemannian manifold and a ball of reference, the Euclidean ball. In hyperbolic spaces, the Riemannian ball will be bigger than the Euclidean one, and in spherical spaces, it will be smaller. If the curved space is flat, they are both equal in volume, and the scalar curvature is null.

\begin{proposition}
    \parencite[Theorem 3.98]{gallot1990riemannian} \\
    The scalar curvature $\Sc(\x)$ at a point $\x\in\MM$ of the Riemannian manifold of dimension $q$ is related to the asymptotic expansion of the volume of a ball on the manifold $\mathcal{B}_g(r)$ compared to the volume of the ball in the Euclidean space $\mathcal{B}_e(r)$, when the radius $r$ tends to 0. 
    \[\vol (\mathcal{B}_g(r)) = \vol(\mathcal{B}_e(r)) \left(1-\frac{\Sc(\x)}{6(q+2)} r^2 + o(r^2)\right) \]
\end{proposition}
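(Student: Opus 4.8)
The plan is to recall the classical proof behind this statement, which is a purely Riemannian-geometry fact and does not use the explicit graph metric from the earlier propositions. First I would fix the point $\x\in\MM$ and work in geodesic normal coordinates $(x^1,\dots,x^q)$ centred at $\x$, i.e.\ coordinates in which each radial line $t\mapsto tv$ is a unit-speed geodesic emanating from $\x$. Two facts make these the right tool: for $r$ below the injectivity radius the geodesic ball $\mathcal{B}_g(r)$ is exactly the image of the Euclidean ball $\{|x|<r\}$, so that $\vol(\mathcal{B}_g(r))=\int_{\{|x|<r\}}\sqrt{\det g(x)}\,dx$; and at the origin one has $g_{ij}(0)=\delta_{ij}$ and $\Gamma^i_{jk}(0)=0$. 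Thus the whole problem reduces to a careful Taylor expansion of the volume density $\sqrt{\det g}$ near the origin, followed by an elementary integration.

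Second, I would derive the second-order expansion of the metric in normal coordinates,
\[
g_{ij}(x)=\delta_{ij}-\frac{1}{3}R_{ikjl}(\x)\,x^k x^l+O(|x|^3),
\]
where $R_{ikjl}$ are the components of the Riemann curvature tensor at $\x$. This is obtained by differentiating the geodesic equation twice along radial geodesics (equivalently, by solving the Jacobi equation to second order): the linear term is absent because all first partials of $g$ vanish at $\x$, and the quadratic coefficient is pinned down to be the curvature tensor by unwinding the definition of $R$ in terms of the $\Gamma$'s and their derivatives. From here, using $\det(I+A)=1+\trace(A)+O(\|A\|^2)$ and $\sqrt{1+t}=1+t/2+O(t^2)$, I obtain
\[
\sqrt{\det g(x)}=1-\frac{1}{6}R_{kl}(\x)\,x^k x^l+O(|x|^3),
\]
with $R_{kl}$ the Ricci tensor, i.e.\ the contraction of $R_{ikjl}$ on its first and third indices.

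Third, I would integrate this density over $\{|x|<r\}$. The one nontrivial elementary computation is that, by rotational symmetry, $\int_{\{|x|<r\}}x^k x^l\,dx=\frac{\delta^{kl}}{q+2}\,r^2\,\vol(\mathcal{B}_e(r))$: the integral is a scalar multiple of $\delta^{kl}$, and taking the trace reduces it to $\int_{\{|x|<r\}}|x|^2\,dx=\frac{q}{q+2}r^2\,\vol(\mathcal{B}_e(r))$, which is immediate in polar coordinates. Substituting, the $O(|x|^3)$ remainder integrates to only $o(r^2)\,\vol(\mathcal{B}_e(r))$, and contracting the Ricci tensor once more, $\delta^{kl}R_{kl}=\Sc(\x)$, produces
\[
\vol(\mathcal{B}_g(r))=\vol(\mathcal{B}_e(r))\left(1-\frac{\Sc(\x)}{6(q+2)}\,r^2+o(r^2)\right),
\]
which is the claim.

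The main obstacle is the second step — establishing the normal-coordinate expansion of $g_{ij}$ with exactly the coefficient $-\tfrac13 R_{ikjl}$. This is the one genuinely geometric ingredient, and it is where care is needed so as not to mishandle the index symmetries of $R$ or pick up the wrong sign; the determinant bookkeeping and the integral over the ball are then purely mechanical. Since the result is quoted from \textcite[Theorem 3.98]{gallot1990riemannian}, I would either reproduce this normal-coordinate computation in the appendix or, more economically, simply defer to that reference there.
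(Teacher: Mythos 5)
Your sketch is the standard normal-coordinate argument (expand $g_{ij}=\delta_{ij}-\tfrac13 R_{ikjl}x^kx^l+O(|x|^3)$, take $\sqrt{\det g}$, integrate $x^kx^l$ over the ball) and it is correct, including the coefficient bookkeeping $\int_{\{|x|<r\}}x^kx^l\,dx=\tfrac{\delta^{kl}}{q+2}r^2\vol(\mathcal{B}_e(r))$. The paper gives no proof of its own and simply cites \textcite[Theorem 3.98]{gallot1990riemannian}, whose proof is exactly the argument you outline, so deferring to that reference (or reproducing your sketch in the appendix) is entirely appropriate.
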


\section{Scalar curvature and optimization}

Corollary~\ref{cor:scalar_curvature} establishes a connection between the scalar curvature at each peak or valley in the loss landscape and the magnitude of the Hessian: $\Sc(\xx) = \norm{\Hess}{*}^2 - \norm{\Hess}_{F}^2$. Although the Hessian norm plays a key role in optimization tasks, we contend that it is not the most reliable gauge of flatness in \textit{all} situations. On one hand, will delve into some issues that arise from only using the Hessian norm in Section~\ref{sec:limitations_trace}. On the other hand, we will see how the scalar curvature reduces to the Hessian norm in some cases and supports theoretical findings in optimization in Section~\ref{sec:scalarcurvature_optimisation}.

\subsection{Limitations of the trace of the Hessian as a measure of flatness}\label{sec:limitations_trace}

The Hessian of the loss function, specifically its trace, has been shown to influence the convergence of optimization algorithms. For instance, \textcite{wei2019noise} revealed that stochastic gradient descent (SGD) reduces the trace of the loss function's Hessian in the context of over-parameterized networks. In a similar vein, \textcite{orvieto2022anticorrelated} discovered that SGD with anti-correlated perturbations enhances generalization due to the induced noise reducing the Hessian's trace. They also identified that the trace serves as an upper limit on the mean loss over a posterior distribution. Furthermore, within Graphical Neural Networks, \textcite{ju2023generalization} demonstrated that the trace of the Hessian can evaluate the model's resilience to noise. \\

\subsubsection*{The saddle point problem}
Yet, relying solely on the trace of the Hessian may not provide an accurate measure of flatness. For instance, if half of the eigenvalues are positive and the other half are negative, with their sum equaling zero, the trace of the Hessian will also be zero. This is misleading as it suggests a flat region, when in reality it is a saddle point. \\

\begin{example}[Curvature of a parameterized function]
Let us imagine that the loss is represented by a function taking in inputs two weights $u$ and $v$ such that: 
\[f(u,v) = e^{-c u} \sin(u) \sin(v), \]
with $c$ a positive constant. We notably have $\lim_{u\to\infty} f(u,v) = 0$, and so the surface tends to be flatter with $u$ increasing. \\

The trace of the Hessian of $f$ and its scalar curvature can be computed analytically, and we have at a point $\x=(u,v)$: 
\begin{align*}
\trace(\Hess)(\x) = & e^{-c u} (-2 u\cos(u) + (c^2-2)\sin(u)) \sin(v) \\
\Sc(\x) = & \frac{(c^2-1)\cos(2 u)-\cos(2v) - c (c-2\sin(2u))}{e^{2cu} + \cos(v)\sin(u)^2 + (\cos(u)-c\sin(u))\sin(v)^2} &
\end{align*}

\end{example}

\begin{figure}[ht]
    \centering
    \includegraphics[width=\textwidth]{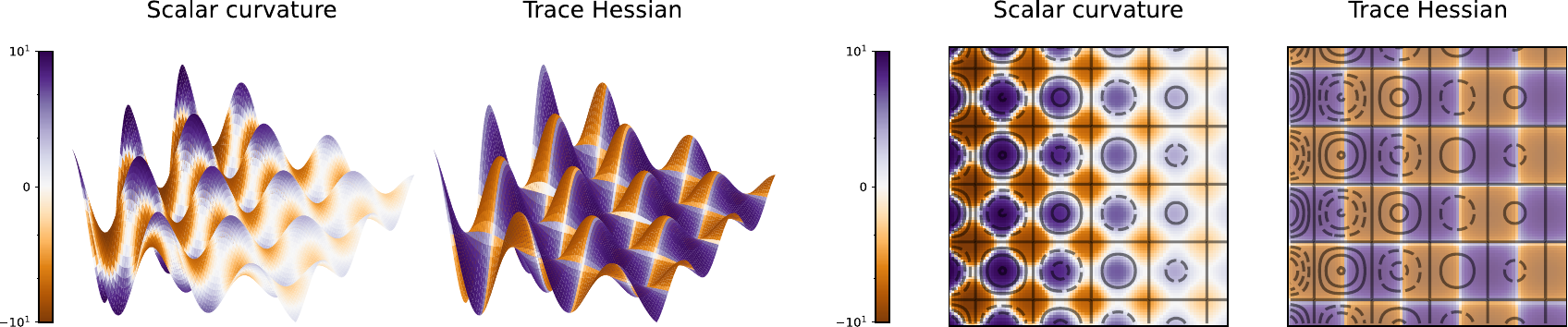}
    \caption{In this figure, the loss function is defined as $f(u,v) = e^{-c u} \sin(u) \sin(v)$, with $c=0.1$. The first two figures represent the surface in 3d, while the two last figures represent the surface seen from above, in the $\{u,v\}$-space. Both the scalar curvature and the trace of the Hessian are shown through the gradient of color.}
    \label{fig:saddlepoint}
\end{figure}

\subsubsection*{The expected flatness over mini-batches}

\begin{wrapfigure}[14]{r}{0.4\textwidth}
\vspace{-20pt}
  \begin{center}
    \includegraphics[width=0.4\textwidth]{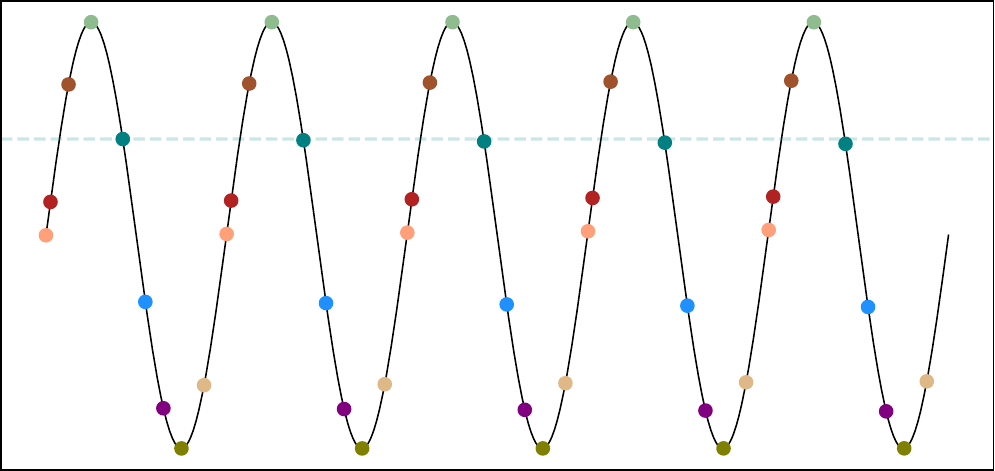}
  \end{center}
  \vspace{-5pt}
  \caption{The data points fit a sinus. The dataset is split into 7 batches of different colors. If the flatness is defined as $\trace(\Hess)$, the flatness over the entire dataset is equal to the expectation of the flatness of a batch. Thus, the curve is considered flat.}
  \label{fig:minibatches}
\end{wrapfigure}

Another challenge emerges when the dataset is divided into small batches. If we choose the Hessian's trace as the measure of flatness, the overall flatness of the entire dataset equals the average flatness over these batches (Equation~\ref{eq:minibatch_trH}). This could potentially induce the wrong conclusion depending on the method used to partition the dataset: In Figure~\ref{fig:minibatches}, the dataset is split in such a way that the trace of the Hessian is null for each batch, which means that the curve is considered as flat over the entire dataset.

The dataset, denoted $\mathcal{D}$, is split into $k$ mini-batches: $\{ \mathcal{B}_1, \mathcal{B}_2, \dots, \mathcal{B}_k \}$. By linearity, the Hessian of the loss function over the entire dataset can be written as the mean of the Hessian of mini-batches i.e.: 
\[\Hess_{\mathcal{D}} = \frac{1}{k}\sum_i \Hess_{\mathcal{B}_i}
\]

As a consequence, since the trace commutes with a summation, we have: $ \trace(\Hess_{\mathcal{D}}) = \trace(\frac{1}{k}\sum_i \Hess_{\mathcal{B}_i}) = \frac{1}{k}\sum_i  \trace(\Hess_{\mathcal{B}_i}) = \mathbb{E}[\trace(\Hess_{\mathcal{B}_i})]$. The trace of the Hessian of the loss function over the entire dataset is the expectation of the Hessian over mini-batches:
\begin{equation} \label{eq:minibatch_trH}
        \trace(\Hess_{\mathcal{D}}) = \mathbb{E}[\trace(\Hess_{\mathcal{B}_i})]
\end{equation}

The corresponding result does not hold for the scalar curvature in general. 
\begin{proposition}
    The scalar curvature of the hessian of the full dataset is not equal to the expectation of the Scalar curvature over mini-batches. That is there exists a dataset, $\mathcal{D}$, and mini-batches, $\{ \mathcal{B}_1, \mathcal{B}_2, \dots, \mathcal{B}_k \}$ such that:
    \[\Sc(\Hess_{\mathcal{D}}) \neq \mathbb{E}[\Sc(\Hess_{\mathcal{B}_i})] \]
\end{proposition}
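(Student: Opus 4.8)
The plan is to prove the statement by producing an explicit counterexample, exploiting the following mismatch: by Corollary~\ref{cor:scalar_curvature} the scalar curvature at a critical point is the \emph{quadratic} expression $Q(H):=\trace(H)^2-\trace(H^2)$ in the Hessian $H$, whereas the passage from mini-batches to the full dataset, $H\mapsto\tfrac1k\sum_i H_i$, is \emph{linear}. A nonzero quadratic form does not commute with averaging except on a thin set of inputs, so essentially any generic partition of a dataset witnesses the inequality; the only point requiring a little care is to exhibit a bona fide loss-plus-mini-batch configuration in which the relevant gradients vanish, so that Corollary~\ref{cor:scalar_curvature} applies verbatim to $\mathcal{D}$ and to every $\mathcal{B}_i$ simultaneously.

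First I would reduce the claim to a statement about symmetric matrices: it suffices to find symmetric $H_1,\dots,H_k$ (of size $q\times q$ with $q\ge 2$) such that $Q\big(\tfrac1k\sum_i H_i\big)\neq\tfrac1k\sum_i Q(H_i)$. Polarizing $Q$, one gets for $k=2$ the identity $Q\big(\tfrac12(H_1+H_2)\big)-\tfrac12\big(Q(H_1)+Q(H_2)\big)=-\tfrac14 Q(H_1-H_2)$, so equality fails exactly when $Q(H_1-H_2)\neq 0$; since $Q$ is a nonzero quadratic form on symmetric matrices, this holds on a dense open set of pairs (and in general the bad splits lie on the zero set of a polynomial in the data, hence have measure zero). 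A concrete choice in $\RR^2$ is $H_1=\mathrm{diag}(1,0)$ and $H_2=\mathrm{diag}(0,1)$: then $Q(H_1)=Q(H_2)=1-1=0$, while $\tfrac12(H_1+H_2)=\mathrm{diag}(\tfrac12,\tfrac12)$ gives $Q=1-\tfrac12=\tfrac12$.

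Next I would realize these matrices as loss Hessians at a common critical point. Take $q=2$ with coordinates $\xx=(u,v)$ and two single-point least-squares batches $f_{\mathcal{B}_1}(\xx)=\tfrac12 u^2$ and $f_{\mathcal{B}_2}(\xx)=\tfrac12 v^2$ (degenerate regressions with zero targets), so that $f_{\mathcal{D}}=\tfrac12(f_{\mathcal{B}_1}+f_{\mathcal{B}_2})=\tfrac14(u^2+v^2)$. At $\xx=0$ the gradients of $f_{\mathcal{B}_1}$, $f_{\mathcal{B}_2}$ and $f_{\mathcal{D}}$ all vanish, and $\Hess_{\mathcal{B}_1}=\mathrm{diag}(1,0)$, $\Hess_{\mathcal{B}_2}=\mathrm{diag}(0,1)$, $\Hess_{\mathcal{D}}=\mathrm{diag}(\tfrac12,\tfrac12)$. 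Applying Corollary~\ref{cor:scalar_curvature} at $\xx=0$ then gives $\Sc(\Hess_{\mathcal{B}_1})=\Sc(\Hess_{\mathcal{B}_2})=0$, hence $\mathbb{E}[\Sc(\Hess_{\mathcal{B}_i})]=0$, whereas $\Sc(\Hess_{\mathcal{D}})=\tfrac12\neq 0$, which is exactly the asserted inequality.

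I do not expect a serious obstacle here: the statement is an existence claim and the only genuine modeling point — and the part that takes a moment of care — is to ensure the configuration is legitimately a dataset with mini-batches and that $\Sc$ is being evaluated where Corollary~\ref{cor:scalar_curvature} is valid, rather than invoking the full $\nabla f$-dependent formula of Proposition~\ref{proposition:full_scalar_curvature}. If one instead insists on the general formula, the same $H_1,H_2$ still separate the two quantities since the extra gradient terms vanish at $\xx=0$; and if one prefers a non-degenerate regression, adding generic nonzero targets merely translates the minimizer while leaving all Hessians, and hence the conclusion, unchanged.
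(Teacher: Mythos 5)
Your proposal is correct, and while it follows the same basic strategy as the paper --- exhibit an explicit two-batch counterexample and evaluate $\trace(H)^2-\trace(H^2)$ on each piece --- your execution is both more careful and more informative than the paper's own proof. The paper simply writes down two matrices and their sum: one of its ``Hessians'' is not symmetric, another is typographically malformed, the ``full dataset'' Hessian is taken to be the sum rather than the average $\frac{1}{k}\sum_i \Hess_{\mathcal{B}_i}$ appearing in its own decomposition, and the curvature values quoted for the batches ($-2$ and $-6$) versus the aggregate ($-22$) are computed with inconsistent normalizations ($\det$ versus $2\det$ for $2\times 2$ matrices); the counterexample survives these slips only because the claim is an existence statement and the discrepancy is robust. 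Your version adds two things the paper lacks. First, the polarization identity $Q\bigl(\tfrac12(H_1+H_2)\bigr)-\tfrac12\bigl(Q(H_1)+Q(H_2)\bigr)=-\tfrac14 Q(H_1-H_2)$ explains \emph{why} counterexamples are generic rather than accidental: equality fails whenever $Q(H_1-H_2)\neq 0$, i.e.\ off a measure-zero set of splits. Second, you realize $\mathrm{diag}(1,0)$ and $\mathrm{diag}(0,1)$ as the Hessians of the actual batch losses $\tfrac12 u^2$ and $\tfrac12 v^2$ sharing the critical point $\xx=0$, so that Corollary~\ref{cor:scalar_curvature} legitimately applies to every batch and to the full dataset simultaneously --- a hypothesis the paper's bare matrices never verify. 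The numerics check out: $Q(H_1)=Q(H_2)=0$ while $Q\bigl(\mathrm{diag}(\tfrac12,\tfrac12)\bigr)=\tfrac12$. Your proof could replace the paper's without loss.
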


\begin{proof}
    See Appendix~\ref{appendix:proofs}.
\end{proof}

\subsection{The scalar curvature supports previous theoretical findings through the Hessian norm} \label{sec:scalarcurvature_optimisation}

Although the two previous given examples suggest that in some cases, the trace of the Hessian is not a good definition of flatness, it is associated with the optimization process and the model's capacity to generalize in various ways. We will observe that under certain circumstances, the scalar curvature simplifies to the Hessian norm.

\subsubsection*{Perturbations on the weights}

\textcite{seong2018towards} showed that the robustness of the loss function to inputs perturbations is related to the Hessian. We similarly show that the resilience of the loss function to weights perturbations is upper bounded by the norm of the Hessian. Additionally, a smaller scalar curvature implies stronger robustness.

\begin{proposition}
\label{prop:perturbation-robustness}
Let $\xx_{\text{min}}$ an extremum, $\varepsilon$, a small scalar ($\varepsilon \ll 1$) and $\x$ a normalized vector ($\norm{\xx}=1$). The trace of the square of the Hessian is an upper bound to the difference of the loss functions when perturbed by the weights: 
\begin{equation}
    \norm{f(\xx_{\text{min}} + \varepsilon \xx) - f(\xx_{\text{min}})}_{2}^2 \leq \frac{1}{4} \varepsilon^4 \trace(\Hess^2_{\text{min}})
\end{equation}
\end{proposition}

\begin{proof}
This is obtained by applying the Taylor expansion, for a very small pertubation $\varepsilon  \ll 1$. See Appendix~\ref{appendix:proofs} for the full proof.  
\end{proof}

\begin{SCfigure}
     \centering
     \begin{subfigure}[b]{0.3\textwidth}
         \centering
         \includegraphics[width=\textwidth]{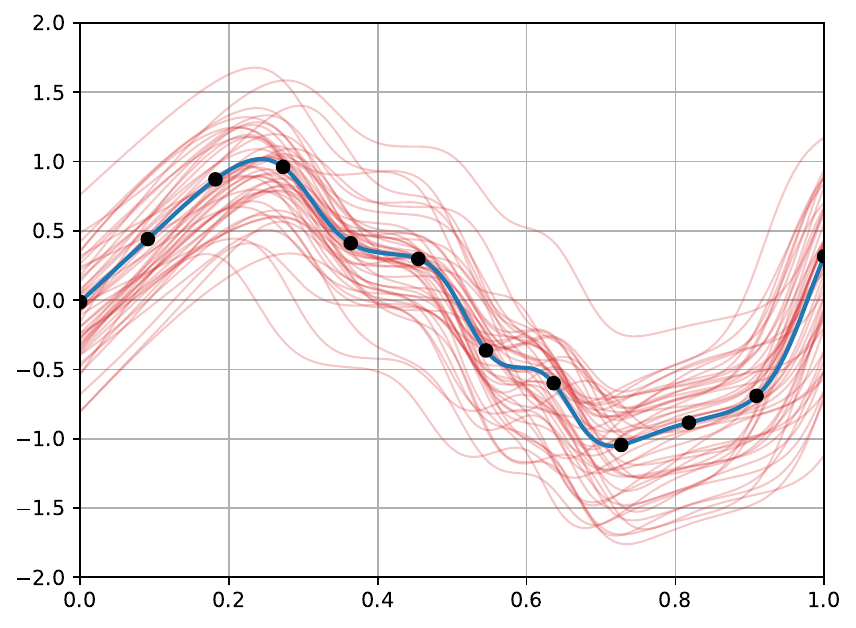}
         \label{fig:y equals x}
     \end{subfigure}
     \hfill
     \begin{subfigure}[b]{0.3\textwidth}
         \centering
         \includegraphics[width=\textwidth]{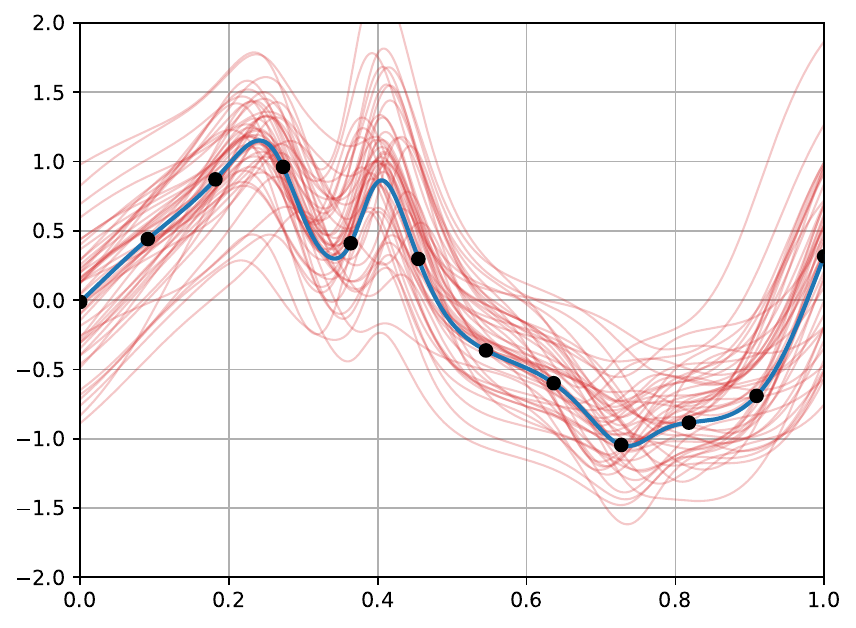}
         \label{fig:three sin x}
     \end{subfigure}
     \caption{Empirical demonstration of Proposition~\ref{prop:perturbation-robustness}. 
     We train two identical and differently initialized deep nets using the same optimizer (Adam).
     We then perturb pointwise the learned weights using Gaussian noise $\mathcal{N}(0,0.1^2)$. 
     As expected the model on the left with scalar curvature $\approx 430$ is more robust to perturbations compared to the right model with scalar curvature $\approx 610$.}
     \label{fig:robutness_perturbations}
\end{SCfigure}

Let us assume two minima $\xx_{1}$ and $\xx_{2}$, and we suppose that the loss function at $\xx_{1}$ is flatter than the one at $\xx_{2}$ in terms of scalar curvature so $0 \leq \Sc(\xx_{1}) \leq \Sc(\xx_{2})$. Being at the minimum implies that $\Sc(\xx_{1}) = \trace(\Hess_1)^2 - \trace(\Hess^2_1)$ and $\Sc(\xx_{2}) = \trace(\Hess_2)^2 - \trace(\Hess^2_2)$ respectively. 
Then: 
\begin{align}
0 \leq \Sc(x_{1}) \leq \Sc(x_{2})    & \iff  0 \leq \trace(\Hess_1)^2 - \trace(\Hess^2_1) \leq  \trace(\Hess_2)^2 - \trace(\Hess^2_2)  \Rightarrow \trace(\Hess^2_{1}) \leq \trace(\Hess^2_{2}).
\end{align}
A flatter minima $\Sc(\xx_1)\leq \Sc(\xx_2)$ leads to more robustness of the loss function to weights perturbations: $\norm{ f(\xx_{1} + \varepsilon \xx) - f(\xx_{1}) }_{2}^2 \leq \norm{f(\xx_{2} + \varepsilon \xx) - f(\xx_{2}) }_{2}^2$. \\
In Figure~\ref{fig:robutness_perturbations}, we consider $\varepsilon\sim\mathcal{N}(0,0.01)$ to be a small perturbation and we plotted the original loss function with the perturbed losses. We computed the $\trace{\Hess^2}$ at the minimum. When the scalar curvature is smaller, the variance across the perturbations at the minimum is smaller and the perturbations are more centered around the original loss function.

\subsubsection*{Efficiency of escaping minima}
Stochastic gradient descent can be conceptualized as an Ornstein-Uhlenbeck process \parencite{uhlenbeck1930theory}, which is a continuous-time stochastic process that characterizes the behavior of a particle influenced by random fluctuations \parencite{mandt2017stochastic}. By considering the non-linear relationship between the weights and the covariance, the update rules in gradient descent resemble the optimization approach employed in the multivariate Ornstein-Uhlenbeck process. When approximating the covariance by the Hessian \parencite[Appendix A]{jastrzebski2017three}, the gradient descent can be seen as an Ornstein-Uhlenbeck process with: 
\begin{equation} \label{eq:MOUprocess}
    \text{d}\x_t = - \Hess \x_t \text{d}t + \Hess^{\frac{1}{2}} \text{d} W_t
\end{equation}

The escaping efficiency measure is a metric used to evaluate the performance of optimization algorithms, including gradient descent, in escaping from local minima and finding the global minimum of the loss function, and is defined as $\EE[ f(\x_t)- f(\xmin)]$. \textcite{zhu2018anisotropic} used this definition and the expression of the gradient descent process (Equation~\ref{eq:MOUprocess})  to approximate the escaping efficiency: 
\begin{equation}
    \EE[ f(\x_t)- f(\xmin)] \approx \frac{t}{2} \trace(\Hess^2).
\end{equation}

Similar to the example above, gradient descent will have more difficulties to escape from a minima with a small scalar curvature, and so it will converge more quickly to the flat minima. 

\subsubsection*{The scalar curvature is the squared norm of the Hessian in over-parameterized neural networks}

\begin{proposition} \label{prop:limit_scalar}
We note $\Hess$ the Hessian of the loss of a model with $q$ parameters, and $\Sc$ the scalar curvature, obtained in Proposition~\ref{proposition:full_scalar_curvature} and Corollary~\ref{cor:scalar_curvature}. When we reach a flat minimum, supposing the eigenvalues of $\Hess$ are similar, for a high number of parameters $q$, we have:
    \[ \Sc(\xx_{\text{min}}) \underset{q \to \infty}{\sim}  \trace(\Hess)^2 \]
\end{proposition}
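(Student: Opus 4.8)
The plan is to start from the closed form at a critical point given by Corollary~\ref{cor:scalar_curvature}, namely $\Sc(\xx_{\text{min}}) = \trace(\Hess)^2 - \trace(\Hess^2)$, and to show that the second term is negligible relative to the first as the number of parameters $q$ grows. Writing $\lambda_1,\dots,\lambda_q$ for the eigenvalues of $\Hess$ at the minimum (all nonnegative, since $\xx_{\text{min}}$ is an extremum), the claim reduces to proving that $\trace(\Hess^2)/\trace(\Hess)^2 = \big(\sum_i \lambda_i^2\big)\big/\big(\sum_i \lambda_i\big)^2 \to 0$, after which factoring out the leading term gives $\Sc(\xx_{\text{min}}) = \trace(\Hess)^2\big(1 - \trace(\Hess^2)/\trace(\Hess)^2\big) \sim \trace(\Hess)^2$.

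Next I would make the hypothesis ``the eigenvalues are similar'' quantitative. The cleanest formalization is to assume there exist constants $0 < a \le b$, independent of $q$, with $a \le \lambda_i \le b$ for all $i$ (a flat minimum corresponds to taking $b$ small, but the argument only uses the ratio $b/a$). Then $\trace(\Hess^2) = \sum_i \lambda_i^2 \le q\,b^2$ while $\trace(\Hess)^2 = \big(\sum_i \lambda_i\big)^2 \ge q^2 a^2$, so $\trace(\Hess^2)/\trace(\Hess)^2 \le b^2/(a^2 q) \to 0$, which closes the argument. Equivalently, one can run this through the norm identity noted after Corollary~\ref{cor:scalar_curvature}: $\Sc(\xx_{\text{min}}) = \norm{\Hess}_*^2 - \norm{\Hess}_F^2$, together with the general bound $\norm{\Hess}_* \le \sqrt{q}\,\norm{\Hess}_F$, which is an equality exactly when all eigenvalues coincide; the similarity assumption forces this bound to be nearly tight, i.e. $\norm{\Hess}_F^2/\norm{\Hess}_*^2 = O(1/q)$, giving the same conclusion.

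A slightly more permissive variant, closer to the informal statement, is to let the eigenvalues fluctuate around a common scale: write $\lambda_i = \mu(1+\varepsilon_i)$ with $\mu = \tfrac1q\sum_i \lambda_i > 0$ and $\sum_i \varepsilon_i = 0$, and assume the empirical variance $\tfrac1q\sum_i \varepsilon_i^2$ stays bounded as $q\to\infty$. Then $\trace(\Hess)^2 = q^2\mu^2$ and $\trace(\Hess^2) = \mu^2\big(q + \sum_i \varepsilon_i^2\big) = O(\mu^2 q)$, so once again the ratio is $O(1/q)$. I would state the $a \le \lambda_i \le b$ version as the main hypothesis and record this bounded-variance version as a remark.

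The main obstacle is not the algebra, which is elementary, but pinning down the hypotheses so that the statement is actually true: the argument genuinely needs $\trace(\Hess)$ to grow with $q$, so the eigenvalues must be bounded \emph{away from zero}, not merely ``similar and small''; and it degenerates if a vanishing fraction of eigenvalues carries a constant share of $\trace(\Hess)$, or if, as often happens in over-parameterized nets, the Hessian has a large kernel and only $r \ll q$ nonzero eigenvalues, in which case $q$ must be replaced by the effective rank $r$ throughout. I would therefore present the proposition with the $a \le \lambda_i \le b$ (or bounded-variance) condition made explicit, flag the effective-rank caveat, and keep the proof itself to the two-line factoring-and-bounding computation above.
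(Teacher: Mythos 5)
Your proposal is correct and rests on the same core computation as the paper's proof: invoke Corollary~\ref{cor:scalar_curvature}, observe that $\trace(\Hess)^2$ grows like $q^2$ while $\trace(\Hess^2)$ grows like $q$, and conclude that the second term is negligible. The difference is in how the hypothesis ``the eigenvalues are similar'' is handled, and your treatment is genuinely better. The paper takes the degenerate case $\lambda_1=\cdots=\lambda_q=\lambda\geq 0$ and writes $\norm{\Hess}_F^2 = q\lambda^2 = o(q^2\lambda^2) = o(\norm{\Hess}_*^2)$; as you implicitly note, this requires $\lambda$ to be bounded away from zero (if $\lambda=0$, or if $\lambda\to 0$ fast enough as $q\to\infty$, the asymptotic equivalence is vacuous or false), a condition the paper never states. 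Your $a\leq\lambda_i\leq b$ formalization with $a>0$ independent of $q$, and the bounded-variance variant $\lambda_i=\mu(1+\varepsilon_i)$, both make the statement true under strictly weaker and explicitly stated hypotheses, and your observation that $q$ must be replaced by the effective rank $r$ when the Hessian is rank-deficient identifies a real limitation for over-parameterized networks that the paper does not acknowledge (and which is arguably the typical regime the proposition is aimed at). In short: same two-line argument, but your version is the one that should appear in the paper, with the positivity and effective-rank caveats stated alongside it.
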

\begin{proof}
Let us suppose that, at a flat minimum, all the eigenvalues are similar: $\lambda_1 = \cdots =\lambda_q = \lambda \geq 0$. Then we, have $\norm{\Hess}_{*}^2 = q^2 \lambda^2$ and $\norm{\Hess}_{F}^2 = q \lambda^2$. When the number of parameters increases, $\norm{\Hess}_{F}^2 = o(\norm{\Hess}_{*}^2)$, and as a consequence $\norm{\Hess}_{*}^2 - \norm{\Hess}_F^2 \sim \norm{\Hess}_{*}^2$.
\end{proof}

In this proposition, we assume that all the eigenvalues are similar. This strong assumption is supported by empirical results \parencite{ghorbani2019investigation}. The empirical results show that during the optimization process, the spectrum of the eigenvalues becomes entirely flat, especially when the neural network includes batch normalization.

\subsection{Reparametrization of the parameter space}

The main argument challenging the link between flatness and generalization is that the flatness definitions, so far, are not \textit{invariant under reparametrization}. Reparametrization refers to a change in the parametrization of the model, which can be achieved by transforming the original parameters ($\theta$) into a new set of parameters ($\eta$). Even if we assume that the models have the same performance: $\{f_{\theta}, \theta\in\Theta\subset\RR^q\} = \{f_{\varphi(\eta)}, \eta\in \varphi^{-1}(\Theta)\}$, this reparametrization alters the shape of the loss function landscape in $\RR^q$. This is the core of the problem: \textcite{dinh2017sharp} compared the flatness of $f_{\theta}$ and $f_{\varphi(\eta)}$ with respect to the same ambient space $\RR^q$, while each measure should be defined, and compared, relative to their respective parameter space, and not to an arbitrary space of reference. 

The scalar curvature is not invariant under reparametrization of the parameter space, and it should not be. It is, however, an \textbf{intrinsic} quantity, which means that it does not depend on an ambient space. As a consequence, it is also equivariant under diffeomorphism, and notably, if $\MM$ and $\MM'$ are two Riemannian manifolds related by an isometry $\Psi:\MM\to\MM'$, then $\Sc(\x) = \Sc(\Psi(\x))$, for all $\x\in\MM$.

In the case of the scalar curvature, if we apply a diffeomorphism to the parameters space with $\varphi:\MM\to\MM'$, and $f:\MM'\subset\RR^q\to \RR$ the loss function, then: 
\[\Hess(f \circ \varphi) = \J(\varphi)^{\top} \Hess(f) \J(\varphi) + \J_k(f) \Hess^k(\varphi),\]
with $\J(\varphi)$, $\J(f)$ the Jacobian of $\varphi$ and $f$, and $\Hess(f \circ \varphi)$, $\Hess(f)$ and $\Hess^k(\varphi)$ the Hessian of $f \circ \varphi$ and $f$. we note $\Hess^k(\varphi)_{ij} = \partial_i \partial_j \varphi^k$ the Hessian of the k-th component of $\varphi$.

At the minimum of the loss function, $\J(f)=0$, with $\varphi:\MM\to\MM'$ a diffeomorphism, and $\x'=\varphi(\x)$, the scalar curvatures on $\MM$ and $\MM'$ is derived as: 
\begin{align*}
\Sc(\x) &= \norm{\Hess_f}_{*}^2 - \norm{\Hess_f}_{F}^2,  \\
\Sc(\x') &= \norm{\J_{\varphi}\J_{\varphi}^{\top}\Hess_f}_{*}^2 - \norm{\J_{\varphi}\J_{\varphi}^{\top}\Hess_f}_{F}^2.
\end{align*}

\section{Discussion}

Our research focused on analyzing the loss landscape  as a Riemannian manifold and its connection to optimization generalization. We introduced a Riemannian metric on the parameter space and examined the scalar curvatures of the loss landscape. We found that the scalar curvature at minima is defined as the difference between the nuclear and Frobenius norm of the Hessian of the loss function. 

The \textit{flatness hypothesis} forms the basis of our study, suggesting that flat minima lead to better generalization compared to sharp ones. The Hessian of the loss function is known to be crucial in understanding optimization. However, analyzing the spectrum of the Hessian, particularly in over-parameterized models, can be challenging. As a result, the research community has started relying on the norm of the Hessian. We show that, in certain scenarios, the Hessian norm doesn't effectively gauge flatness, whereas scalar curvature does. Despite this, the Hessian norm is still relevant to theoretical results in optimization, including the model's stability against perturbations and the algorithm's ability to converge. Similarly, these characteristics are also satisfied by the scalar curvature. In essence, the scalar curvature combines all the advantages of the Hessian norm while accurately describing the curvature of the parameter space.

Future research could explore the curvature within stochastic optimization and investigate the scalar curvature as a random variable affected by the underlying data and batch distribution. It would also be interesting to understand how the scalar curvature relates to the stochastic process and whether it is connected to any implicit regularization in the model.

Overall, our study contributes to the understanding of the loss function's parameter space as a Riemannian manifold and provides insights into the curvature properties that impact optimization and generalization.

\newpage
\printbibliography
\newpage
\section*{Appendix}
\appendix
\section{A primer on curvatures in Riemannian geometry} \label{appendix:definition}

The key strength of the Riemannian geometry is to allow for calculations to be conducted independently of the choice of the coordinates. However, this flexibility results in more sophisticated computations. Specifically, as a vector moves across a manifold, its local coordinates also change. We must consider this shift, which is accomplished by including a correction factor, denoted as $\Gamma$, to the derivative of the vector. These factors $\Gamma$ are known as \textbf{Christoffel symbols}.

\begin{definition}[\textbf{Christoffel symbols}] \label{def:christoffel_symbols}
    Let $(\MM,g)$ be a Riemannian manifold, and $\uu$ and $\vv$ two vector fields on $\MM$. On the manifold, we need to add the \textbf{Christoffel symbols} $\Gamma ^{k}_{ij}$ to account for the variation of the local basis represented by $\e_i$. The covariant derivative, or \textbf{connection}, is then defined by:
    \[\nabla _{\uu}{\vv} =u^i \partial_i v^j \e_j + u^i v^j \Gamma^{k}_{ji} \e_k, \]
    with $\nabla_{\uu}\vv = u^i \partial_i v^j \e_j$ the covariant derivative of $\vv$ along $\uu$ in the Euclidean plane. We can further compute the Christoffel symbols based on the Riemannian metric tensor $g_{ij}$:
    \[\Gamma^{k}_{ij} = \frac{1}{2} g^{kl} \left( \partial_i g_{jl} + \partial_j g_{il} - \partial_l g_{ij} \right),\]
\end{definition}

Now, we are interested in the concept of curvature. In Riemannian geometry, the curvature is defined as the deviation of the manifold from the Euclidean plane. The principal intrinsic tool that assess the curvature of a manifold is the \textbf{Riemann curvature tensor}, denoted $\R$. It characterises the change of the direction of a vector, when transported along an infinitesimally small closed loop. The Riemannian curvature tensor is defined the following way: 

\begin{definition}[\textbf{Riemann curvature tensor}] \label{def:riemann_curv}
    Let $(\MM, g, \nabla)$ be a Riemannian manifold. The \textbf{Riemannian curvature tensor} is defined by:
    \[\R(\xx, \yy; \zz) = \nabla_{\xx} \nabla_{\yy} \zz - \nabla_{\zz} \nabla_{\yy} \zz - \nabla_{[\xx,\yy]} \zz,\]
    for any vector fields $\xx, \yy, \zz \ \in\vecField$, with $[\cdot,\cdot]$ the Lie bracket. At the local basis represented by $\e_i$, it can be expressed in terms of indices: $\R^{l}_{ijk} = \e^l \R(\e_j, \e_k; \e_i)$, and in terms of the Christoffel symbols as:
    \[R_{ijk}^{l} = \partial_{i} \christoffel{l}{jk} - \partial_j \christoffel{l}{ik} + \christoffel{m}{jk}\christoffel{l}{im} - \christoffel{m}{ik}\christoffel{l}{jm} \]
\end{definition}

The Riemann curvature tensor being a fourth order tensor, it can difficult to interpret. Instead, we can look at a scalar quantity called the \textbf{scalar curvature} or equivalently the scalar Ricci curvature, which is a contraction of the Riemann curvature tensor.

\begin{definition}[\textbf{Scalar curvature}] \label{def:scalar_curv}
    Let $(\MM,g)$ be a Riemannian manifold. The \textbf{scalar curvature} is defined as:
    \[\Sc = g^{ij} \R^{k}_{ikj},\]
    using the Einstein summation convention, with $g^{ij}$ the inverse of the metric tensor $g_{ij}$, and $\R^{k}_{ikj}$ the components of the Riemannian curvature tensor.
\end{definition}

Just like the Riemannian curvature tensor and the Riemannian metric tensor, the scalar curvature is defined for every point on the manifold. The scalar curvature is null when the manifold is isometric to the Euclidean plane. It is be negative when the manifold is hyperbolic, or positive when the manifold is spherical. \\

\begin{figure}
    \centering
    \includegraphics[width=\textwidth]{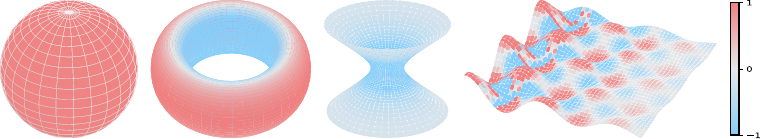}
    \caption{The scalar curvature is plotted for different surfaces in $\RR^3$. The sphere has always a positive scalar curvature since it is convex, the hyperboloid is always negative. In those figures, the values of the scalar curvature have been normalised between $-1$ and $1$.}
    \label{fig:scalar_surfaces}
\end{figure}

By definition, the scalar curvature is an intrinsic quantity, meaning that it does not depend on the ambient space. As a consequence, the scalar curvature is equivariant under diffeomorphisms. If we map a manifold $(\MM, g)$ to another manifold $(\MM', g', \nabla')$ with a diffeomorphism $\varphi: \MM' \to \MM$, we can express the connection $\nabla'$ as the pullback of $\nabla$: $\nabla' = d\varphi^{*}\nabla$. The curvature of the pullback connection is the pullback of the curvature of the original connection. In other terms: $d\varphi^{*} \Sc(\nabla) = \Sc(d\varphi^{*} \nabla)$ \parencite[Proposition 2.59]{andrews2010ricci}. In particular, if $\varphi$ is an isometry: $\Sc(\nabla) = \Sc(\nabla')$.

\section{Theoretical results} \label{appendix:proofs}

\subsection{Definition of the scalar curvature and other curvature measures}

\begin{proposition}
The Christoffel symbols of the metric $\G = \Id + \nabla_{x} f \nabla_{x} f^{\top}$, in the parameter space $\Omega\subset\RR^q$ with $ f$ the loss function is given by:  

\[\Gamma^{i}_{kl} = \frac{ f_{,i} f_{,kl}}{1+\norm{\nabla f}^2}\]
\end{proposition}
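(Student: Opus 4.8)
The plan is to compute the Christoffel symbols directly from the metric formula $g_{ij} = \delta_{ij} + f_{,i} f_{,j}$ using the coordinate expression $\Gamma^{i}_{kl} = \frac{1}{2} g^{im}(\partial_k g_{ml} + \partial_l g_{mk} - \partial_m g_{kl})$ from Definition~\ref{def:christoffel_symbols}. The two ingredients I need are the partial derivatives of the metric and the inverse metric. For the derivatives, $\partial_k g_{ml} = \partial_k(f_{,m} f_{,l}) = f_{,mk} f_{,l} + f_{,m} f_{,lk}$ by the product rule. For the inverse, since $g$ is a rank-one update of the identity, the Sherman--Morrison formula gives $g^{im} = \delta_{im} - \beta f_{,i} f_{,m}$ with $\beta = (1 + \norm{\nabla f}^2)^{-1}$; I would verify this by checking $g^{im} g_{mj} = \delta_{ij}$ in one line.

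Next I would substitute these into the bracketed sum. Using symmetry of the Hessian ($f_{,ab} = f_{,ba}$), the three terms combine: $\partial_k g_{ml} + \partial_l g_{mk} - \partial_m g_{kl} = (f_{,mk} f_{,l} + f_{,m} f_{,lk}) + (f_{,ml} f_{,k} + f_{,m} f_{,kl}) - (f_{,km} f_{,l} + f_{,k} f_{,lm})$, and after cancellation the $f_{,mk} f_{,l}$ and $f_{,km} f_{,l}$ terms kill each other and similarly one pair involving $f_{,ml} f_{,k}$ and $f_{,k} f_{,lm}$, leaving $2 f_{,m} f_{,kl}$. So the half and the factor of two cancel, and $\Gamma^{i}_{kl} = g^{im} f_{,m} f_{,kl}$.

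Finally I would contract with the inverse metric: $g^{im} f_{,m} = (\delta_{im} - \beta f_{,i} f_{,m}) f_{,m} = f_{,i} - \beta f_{,i} \norm{\nabla f}^2 = f_{,i}(1 - \beta \norm{\nabla f}^2) = f_{,i} \cdot \beta(1 + \norm{\nabla f}^2 - \norm{\nabla f}^2) = \beta f_{,i}$. Hence $\Gamma^{i}_{kl} = \beta f_{,i} f_{,kl} = \frac{f_{,i} f_{,kl}}{1 + \norm{\nabla f}^2}$, as claimed.

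There is no serious obstacle here; it is a routine index computation. The only place to be careful is the bookkeeping in the cancellation step — making sure the right Hessian-times-gradient terms pair up — and correctly recalling (or re-deriving) the Sherman--Morrison inverse for the rank-one-perturbed identity. I would present the inverse-metric verification explicitly since it is the one non-mechanical input, and otherwise let the algebra run.
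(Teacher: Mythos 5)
Your proposal is correct and follows essentially the same route as the paper: compute $\partial_k g_{ml}$ by the product rule, observe that the bracketed combination collapses to $2 f_{,m} f_{,kl}$ by symmetry of the Hessian, invert the metric via Sherman--Morrison, and contract to get $g^{im} f_{,m} = \beta f_{,i}$. Your write-up of the final contraction is in fact slightly cleaner than the paper's, which leaves an implicit summation written as $f_{,m}^2$ where $\norm{\nabla f}^2$ is meant.
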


\begin{proof}

We use below the Einstein sum notation, and in particular, for the scalar function $ f$: $\partial_i \partial_j  f =  f_{,ij}$. 
The Christoffels symbols are obtained with the Riemannian metric: 
\[\Gamma_{kl}^i = \frac{1}{2} g^{im} \left( g_{mk,l} + g_{ml,k} - g_{kl,m}\right)\]

Our metric is $\G = \Id + \nabla f\ \nabla f^{\top}$. Using the Sherman-Morrison formula: $\G^{-1} = \Id - \frac{\nabla f\ \nabla f^{\top}}{1+ \norm{\nabla f}^2} $ 

\begin{align*}
g_{ij}    & = \G_{ij} = \delta_{ij} +  f_{,i}  f_{,j} \\
g_{ij, k} & =  f_{,ik}  f_{,j} +  f_{,i}  f_{,jk} \\
g_{mk, l} + g_{ml, k} - g_{kl, m} & = 2  f_{,kl}  f_{,m} \\
g^{im}    & = \G^{-1}_{im} = \delta_{im} - \frac{ f_{,i}  f_{,m}}{1+\norm{\nabla  f}^2} 
\end{align*}

Then: 
\[\Gamma^{i}_{kl} = \left(\delta_{im} - \frac{ f_{,i}  f_{,m}}{1+\norm{\nabla  f}^2} \right)  f_{,kl}  f_{,m} =  f_{,kl}  f_{,i} - \frac{  f_{,kl}  f_{,i}  f_{,m}^2}{1+\norm{\nabla  f}^2} = \frac{ f_{,i} f_{,kl}}{1+\norm{\nabla f}^2}. \]
\end{proof}

\begin{proof}
The coordinates of the Riemannian tensor curvature can be written with the Christoffel symbols:
\[R^\sigma_{\mu\nu\kappa} = \frac{\partial{\Gamma^\sigma}_{\mu\kappa}}{\partial x^\nu} - \frac{\partial{\Gamma^\sigma}_{\mu\nu}}{ \partial x^\kappa} + \Gamma^\sigma_{\nu\lambda} \Gamma^\lambda_{\mu\kappa} - \Gamma^\sigma_{\kappa\lambda} \Gamma^\lambda _{\mu\nu}\]
\end{proof}

\begin{lemma}
The metric tensor $\G = \Id + \nabla f \ \nabla f^{\top}$ has for eigenvalues: $\{1,1,\cdots, 1, 1+\norm{\nabla f}^2\}$.
\end{lemma}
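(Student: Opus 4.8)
The plan is to exploit the rank-one structure of the matrix $\nabla f\,\nabla f^{\top}$. The claim is that $\G = \Id + \nabla f\,\nabla f^{\top}$ has eigenvalues $\{1,\dots,1,1+\norm{\nabla f}^2\}$, i.e. the eigenvalue $1$ with multiplicity $q-1$ and the eigenvalue $1+\norm{\nabla f}^2$ once. First I would dispose of the trivial case $\nabla f = 0$, where $\G = \Id$ and the statement is immediate (all eigenvalues equal $1 = 1 + \norm{\nabla f}^2$). So assume $\nabla f \neq 0$ and write $v := \nabla f$.

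The key observation is that $vv^{\top}$ is a symmetric rank-one matrix: its image is the line $\mathrm{span}(v)$ and its kernel is the hyperplane $v^{\perp}$ of dimension $q-1$. For any $w \in v^{\perp}$ we have $\G w = w + v(v^{\top}w) = w$, so every vector orthogonal to $v$ is an eigenvector of $\G$ with eigenvalue $1$; choosing an orthonormal basis of $v^{\perp}$ gives $q-1$ linearly independent eigenvectors for the eigenvalue $1$. For the remaining direction, compute $\G v = v + v(v^{\top}v) = (1 + \norm{v}^2)\,v$, so $v$ itself is an eigenvector with eigenvalue $1 + \norm{v}^2 = 1 + \norm{\nabla f}^2$. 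Since $v \notin v^{\perp}$, these $q$ eigenvectors together form a basis of $\RR^q$, so we have found the complete spectrum, which establishes the claim. (Alternatively one can note $\G$ is symmetric positive definite, $\trace(\G) = q + \norm{\nabla f}^2$ and $\det(\G) = 1 + \norm{\nabla f}^2$ by the matrix determinant lemma, and match these against a spectrum $\{1,\dots,1,\mu\}$ to force $\mu = 1+\norm{\nabla f}^2$.)

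There is no real obstacle here — the statement is a textbook fact about rank-one perturbations of the identity — so the "hard part" is merely bookkeeping: being explicit that the $q-1$ eigenvectors spanning $v^{\perp}$ together with $v$ are linearly independent and hence exhaust the spectrum. This lemma is presumably invoked to evaluate $\beta = (1+\norm{\nabla f}^2)^{-1}$ as $\det(\G)^{-1}$ and to simplify traces of powers of $\G$ and $\G^{-1}$ appearing in the scalar curvature computation, so for downstream use it is worth also recording that $\G^{-1}$ has eigenvalues $\{1,\dots,1,(1+\norm{\nabla f}^2)^{-1}\}$ with the same eigenvectors, which follows immediately.
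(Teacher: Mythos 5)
Your proof is correct and follows essentially the same route as the paper's: identify $v=\nabla f$ as an eigenvector with eigenvalue $1+\norm{v}^2$ and the orthogonal complement $v^{\perp}$ as the eigenspace for eigenvalue $1$. Your version is in fact a bit more careful than the paper's (you handle the degenerate case $\nabla f=0$ and explicitly verify that the eigenvectors span $\RR^q$), but there is no substantive difference in approach.
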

\begin{proof}
$\G$ is a symmetric positive definite matrix, hence it is diagonalisable and all its eigenvectors $\vec{w}$ are orthogonal. Let's note $\vv = \nabla f$. For the eigenvector $\vv$: $\G\vv = (1+\norm{\vv}^2)\vv$. For all the other eigenvectors, $\inner{\vec{w}}{\vv} =0$ and $\G\vec{w}=\vec{w}$.
\end{proof}

\begin{proposition}
The contraction of the Christoffel symbols for the metric $\G = \Id + \nabla f \ \nabla f^{\top}$: 
\[\Gamma_{ki}^i = \frac{ f_{,ik} f_{,i}}{1+\norm{\nabla  f}^2}.\]
\end{proposition}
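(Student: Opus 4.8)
The plan is to compute the contraction $\Gamma^i_{ki}$ directly from the closed form of the Christoffel symbols already established, namely $\Gamma^i_{kl} = \beta\, f_{,i} f_{,kl}$ with $\beta = (1+\norm{\nabla f}^2)^{-1}$. Setting $l = i$ and summing over the repeated index $i$ (Einstein convention) gives $\Gamma^i_{ki} = \beta\, f_{,i} f_{,ki}$, which is exactly the claimed expression $\frac{f_{,ik} f_{,i}}{1+\norm{\nabla f}^2}$ once one notes that $f_{,ki} = f_{,ik}$ by symmetry of second partials (Schwarz/Clairaut). So the proof is essentially a one-line substitution.

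The one point worth spelling out is the index bookkeeping: in the formula $\Gamma^i_{kl} = \beta f_{,i} f_{,kl}$, the index $i$ appears once as an upper index and the contraction $\Gamma^i_{ki}$ asks us to identify the lower index $l$ with the upper index $i$ and sum. I would write this carefully so the reader sees that after contraction the term $f_{,i} f_{,ki}$ is summed over $i = 1,\dots,q$, i.e. it is the inner product $\nabla f^\top \Hess_{k\cdot}$ where $\Hess_{k\cdot}$ denotes the $k$-th row of the Hessian. Equivalently, $\Gamma^i_{ki} = \beta\, (\Hess \nabla f)_k$. This interpretation is optional but makes the later use of the contraction (in computing the scalar curvature) more transparent.

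I do not anticipate any real obstacle here — the result is an immediate corollary of the earlier proposition on the Christoffel symbols. The only thing to be mildly careful about is not to double-count or mis-handle the summation range, and to invoke the symmetry $f_{,ik} = f_{,ki}$ explicitly so that the stated form matches. Concretely, the proof reads:
\begin{proof}
By the formula for the Christoffel symbols, $\Gamma^i_{kl} = \beta\, f_{,i} f_{,kl}$ with $\beta = (1+\norm{\nabla f}^2)^{-1}$. Contracting the upper index with the second lower index (summing over $i$) and using $f_{,ki} = f_{,ik}$,
\[
\Gamma^i_{ki} = \beta\, f_{,i} f_{,ki} = \frac{f_{,ik}\, f_{,i}}{1+\norm{\nabla f}^2},
\]
as claimed.
\end{proof}
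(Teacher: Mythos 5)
Your proof is correct, and it is in fact the route the paper itself flags as ``another method'' at the end of its own proof: contract the already-derived formula $\Gamma^i_{kl} = \beta f_{,i} f_{,kl}$ by setting $l=i$ and summing, using the symmetry of second partials. The paper's \emph{primary} argument is different: it invokes the general identity $\Gamma^i_{ki} = \partial_k \ln\sqrt{\det \G}$, computes $\det \G = 1 + \norm{\nabla f}^2$ from a separate lemma on the eigenvalues of $\G = \Id + \nabla f\, \nabla f^{\top}$ (one eigenvalue $1+\norm{\nabla f}^2$ along $\nabla f$, the rest equal to $1$), and then differentiates the log-determinant. The trade-off is that your derivation is shorter and leans entirely on the earlier proposition, whereas the paper's log-determinant route is logically independent of that computation and so serves as a consistency check on it; it also exhibits the determinant of the graph metric, which is reused implicitly elsewhere. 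Your closing observation that $\Gamma^i_{ki} = \beta\,(\Hess\nabla f)_k$ is a nice coordinate-free reading that neither version of the paper's proof states explicitly, and it does make the subsequent Ricci contraction easier to follow. No gaps.
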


\begin{proof}
By definition, we have $\Gamma^i_{ki} = \partial_k \ln \sqrt{\det \G}$. By the previous lemma, we know that $\det G = 1+\norm{\nabla f}^2 = 1+ f_{,i}^2$.
\[
\Gamma^{i}_{ki} =  \partial_k \ln \sqrt{\det \G} =  \partial_k \ln \sqrt{ 1+{ f_{,i}}^2} = \frac{1}{2} \frac{\partial_k (1+ f_{,i}^2)}{1+\norm{\nabla f}^2} = \frac{ f_{,ik} f_{,i}}{1+\norm{\nabla f}^2}.
\]

Another method is to use the general expression of $\Gamma^{i}_{kl} = \frac{ f_{,i} f_{,kl}}{1+\norm{\nabla f}^2}$, and the result is obtained for $i=l$.
\end{proof}

\begin{proposition}
The Riemannian curvature tensor is given by: 
\[R^i_{jkm} = \beta ( f_{,ik} f_{,jm} -  f_{,jm} f_{,jk}) - \beta^2  f_{,i} f_{,r} ( f_{,rk} f_{,im} -  f_{,rm} f_{,jk}) \]
\end{proposition}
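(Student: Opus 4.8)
The plan is to use the coordinate formula for the Riemann curvature tensor from Definition~\ref{def:riemann_curv},
\[
R^i_{jkm} = \partial_k \Gamma^i_{jm} - \partial_m \Gamma^i_{jk} + \Gamma^i_{kl}\,\Gamma^l_{jm} - \Gamma^i_{ml}\,\Gamma^l_{jk},
\]
and substitute the closed form $\Gamma^i_{kl} = \beta\, f_{,i}\, f_{,kl}$ established in the first proposition of this subsection, where $\beta = (1+\norm{\nabla f}^2)^{-1}$. Everything then reduces to differentiating $\Gamma$ once and multiplying it by itself.

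First I would treat the two derivative terms. Since $\beta^{-1} = 1 + f_{,s} f_{,s}$, the chain rule gives $\partial_k \beta = -2\beta^2 f_{,s} f_{,sk}$. Applying the product rule,
\[
\partial_k \Gamma^i_{jm} = (\partial_k\beta)\, f_{,i}\, f_{,jm} + \beta\, f_{,ik}\, f_{,jm} + \beta\, f_{,i}\, f_{,jmk},
\]
and symmetrically for $\partial_m \Gamma^i_{jk}$. The crucial observation is that the third-derivative contributions $\beta\, f_{,i}\, f_{,jmk}$ and $\beta\, f_{,i}\, f_{,jkm}$ are equal by symmetry of mixed partials, hence cancel in the difference; this is what keeps the final answer free of $\nabla^3 f$. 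What survives is a $\beta$-linear piece $\beta\,(f_{,ik} f_{,jm} - f_{,im} f_{,jk})$ plus a $\beta^2$-piece coming from $\partial_k\beta$ and $\partial_m\beta$, which after relabelling the contracted index equals $-2\beta^2 f_{,i} f_{,r}\,(f_{,rk} f_{,jm} - f_{,rm} f_{,jk})$.

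Next I would expand the quadratic Christoffel product, which is immediate:
\[
\Gamma^i_{kl}\,\Gamma^l_{jm} - \Gamma^i_{ml}\,\Gamma^l_{jk} = \beta^2 f_{,i}\, f_{,l}\,(f_{,kl} f_{,jm} - f_{,ml} f_{,jk}),
\]
with $l$ the summation index. Collecting all $\beta^2$ terms, renaming the dummy index to $r$, and using the coefficient arithmetic $1 - 2 = -1$, these collapse to the single term $-\beta^2 f_{,i} f_{,r}\,(f_{,rk} f_{,jm} - f_{,rm} f_{,jk})$; together with the $\beta$-linear piece this reproduces the asserted expression (up to the evident index typos in the printed formula).

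The derivation is essentially bookkeeping, so there is no deep obstacle; the two points that require care are (i) not dropping the factor $-2$ in $\partial_k\beta$, and (ii) noticing the cancellation of the third-order derivatives and performing the consistent relabelling of contracted indices that lets the $\partial\beta$ terms merge with the Christoffel products into one clean $\beta^2$ term.
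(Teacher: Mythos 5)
Your proposal is correct and follows essentially the same route as the paper: substitute $\Gamma^i_{kl}=\beta f_{,i}f_{,kl}$ into the coordinate formula for $R^i_{jkm}$, use $\partial_k\beta=-2\beta^2 f_{,s}f_{,sk}$, cancel the third-derivative terms, and combine the $-2\beta^2$ contribution from $\partial\beta$ with the $+\beta^2$ Christoffel product to get the final coefficient $-1$. In fact your write-up carries the bookkeeping further than the paper's own proof, which merely lists the four constituent terms without assembling them, and you are right that the printed formula contains index typos (the correct $\beta$-linear piece is $\beta(f_{,ik}f_{,jm}-f_{,im}f_{,jk})$ and the $\beta^2$ piece should contract as $f_{,rk}f_{,jm}-f_{,rm}f_{,jk}$).
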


\begin{proof}

The Riemannian curvature tensor is given by: $R^i_{jkm} = \partial_k \Gamma^i_{jm} - \partial_m \Gamma^{i}_{jk} + \Gamma^{i}_{rk}\Gamma^r_{jm} - \Gamma^{i}_{rm}\Gamma^r_{jk}$, and we have for Christoffel symbols: $\Gamma^{i}_{jm} = \beta  f_{,i} f_{,jm}$. \\

We note $\beta = (1+\norm{\nabla f}^2)^{-1}$. 
We have: $\partial_k (\beta  f_{,i} f_{,jm}) = \partial_k (\beta)  f_{,i} f_{,jm} + \beta ( f_{,ik} f_{,jm}+  f_{,i} f_{,jmk})$, and $\partial_k (\beta) = - 2 \beta^2  f_{ka}  f_{a}$.

\begin{align*}
\partial_k \Gamma^i_{jm}    & = -2\beta^2  f_{,a} f_{,ak} f_{,i} f_{,jm} + \beta ( f_{,ik} f_{,jm}+ f_{,i} f_{,jmk}) \\
\partial_m \Gamma^i_{jk}    & = -2\beta^2  f_{,a} f_{,ak} f_{,i} f_{,jm} + \beta  ( f_{,im} f_{,jk}+ f_{,i} f_{,jkm}) \\
\Gamma^{i}_{rk}\Gamma^r_{jm} & = \beta^2  f_{,i} f_{,rk} f_{,r} f_{,jm}\\
\Gamma^{i}_{rm}\Gamma^r_{jk} & = \beta^2  f_{,i} f_{,rm} f_{,r} f_{,jk}\\
\end{align*}

\end{proof}

\begin{proposition}
The Ricci scalar curvature is given by: 
\[R =  \beta \left(\trace(\Hess)^2 - \trace(\Hess^2)\right) + 2 \beta^2 \left( \nabla f^{\top} (\Hess^2 - \trace(\Hess) \Hess) \nabla f\right),\]

with $\Hess$ the Hessian of $ f$. 
\end{proposition}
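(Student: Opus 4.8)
The plan is to obtain $\Sc$ directly from its coordinate definition $\Sc = g^{ij} R^k_{ikj}$ by performing the two index contractions in succession: first contract the Riemann tensor $R^i_{jkm}$ of the previous proposition down to the Ricci tensor $R_{jm} = R^i_{jim}$, and then contract $R_{jm}$ against the inverse metric. It is convenient to keep the shorthand $H_{ij} = f_{,ij}$ and $\beta = (1+\norm{\nabla f}^2)^{-1}$, and to use the two facts already available above: the Sherman--Morrison form $g^{ij} = \delta_{ij} - \beta f_{,i} f_{,j}$, and $\partial_k \beta = -2\beta^2 f_{,a} f_{,ak}$, which contracts gradients into Hessian--vector products. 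I would work entirely from the Christoffel symbols $\Gamma^i_{jm} = \beta f_{,i} f_{,jm}$ and the contracted symbol $\Gamma^i_{ji} = \beta f_{,a} f_{,aj}$ rather than from the four-index expression, to keep the $\beta$ bookkeeping transparent.

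First step: compute the Ricci tensor. Setting $k=i$ in $R^i_{jkm} = \partial_k\Gamma^i_{jm} - \partial_m\Gamma^i_{jk} + \Gamma^i_{rk}\Gamma^r_{jm} - \Gamma^i_{rm}\Gamma^r_{jk}$ and summing gives
\[
R_{jm} = \partial_i\Gamma^i_{jm} - \partial_m\Gamma^i_{ji} + \Gamma^i_{ri}\Gamma^r_{jm} - \Gamma^i_{rm}\Gamma^r_{ji}.
\]
Expanding each term with the product rule, the two third-derivative contributions ($\beta f_{,i} f_{,ijm}$ from the first term and its counterpart from the second) cancel by full symmetry of $f_{,ijm}$; collecting what remains (three of the pieces carry a factor $\beta^2$, arising from $\partial\beta$ and from the quadratic Christoffel terms) yields
\[
R_{jm} = \bigl(\beta\,\trace(\Hess) - \beta^2\,\nabla f^{\top}\Hess\nabla f\bigr) H_{jm} - \beta\,(\Hess^2)_{jm} + \beta^2 (\Hess\nabla f)_j (\Hess\nabla f)_m .
\]
This collection step is where I expect errors to hide: one must check that the coefficient of the rank-one term $(\Hess\nabla f)_j(\Hess\nabla f)_m$ is exactly $+\beta^2$ (a $+2\beta^2$ from $-\partial_m\Gamma^i_{ji}$ plus a $-\beta^2$ from $-\Gamma^i_{rm}\Gamma^r_{ji}$), and that the coefficient of $H_{jm}$ is $\beta\trace(\Hess) - \beta^2\nabla f^{\top}\Hess\nabla f$ after a $-2\beta^2$ and a $+\beta^2$ combine. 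As a consistency check I would also rederive $R_{jm}$ by contracting $i=k$ directly in the four-index formula of the previous proposition.

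Second step: contract with $g^{ij} = \delta_{ij} - \beta f_{,i} f_{,j}$, so that $\Sc = \trace(R_{jm}) - \beta\,\nabla f^{\top} R\,\nabla f$. Taking the trace of the displayed $R_{jm}$ gives $\beta\trace(\Hess)^2 - \beta^2(\nabla f^{\top}\Hess\nabla f)\trace(\Hess) - \beta\trace(\Hess^2) + \beta^2\,\nabla f^{\top}\Hess^2\nabla f$, while contracting $R_{jm}$ twice with $\nabla f$ gives $\beta\trace(\Hess)(\nabla f^{\top}\Hess\nabla f) - \beta\,\nabla f^{\top}\Hess^2\nabla f$, where a pair of would-be $\beta^2(\nabla f^{\top}\Hess\nabla f)^2$ terms cancels, so that this quantity stays at order $\beta$ and no $\beta^3$ survives after the final multiplication by $\beta$. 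Subtracting $\beta$ times the latter from the former and regrouping the $\beta$- and $\beta^2$-order terms gives
\[
\Sc = \beta\bigl(\trace(\Hess)^2 - \trace(\Hess^2)\bigr) + 2\beta^2\,\nabla f^{\top}\bigl(\Hess^2 - \trace(\Hess)\,\Hess\bigr)\nabla f,
\]
which is the claim (and reduces to Corollary~\ref{cor:scalar_curvature} when $\nabla f = 0$, since then $\beta = 1$). The only ingredients used beyond elementary algebra are symmetry of third partials and the Sherman--Morrison inverse; no curvature identity is needed, and the main obstacle is purely the disciplined tracking of the powers of $\beta$ through the two contractions.
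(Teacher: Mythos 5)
Your proposal is correct and follows essentially the same route as the paper: contract the Riemann tensor to the Ricci tensor $R_{jm}=R^i_{jim}$, then contract with the Sherman--Morrison inverse metric $g^{ij}=\delta_{ij}-\beta f_{,i}f_{,j}$, and your intermediate Ricci tensor and the cancellation of the would-be $\beta^3$ terms match the paper's computation exactly. The only (harmless) difference is that you perform the $i=k$ contraction at the level of the Christoffel symbols rather than starting from the paper's already-expanded four-index expression, which is arguably safer given the index typos in that displayed formula.
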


\begin{proof}
We use $\beta^{-1} = 1+\norm{\nabla f}^2$, $\Hess$ the Hessian of $ f$, and $\norm{\cdot}_{1,1}$ the matrix norm $L_{1,1}$. \\

The Ricci tensor is given by: 
\begin{align*}
R_{ab} = R^i_{aib} & = \beta ( f_{,ii} f_{,ab}- f_{,bi} f_{,ai}) - \beta^2  f_{,i} f_{,r}( f_{,ir} f_{,ab}- f_{,br} f_{,ai}) \\
 & = \beta (\trace(\Hess)\Hess_{ab}-\Hess_{ab}^2) - \beta^2 \left((\nabla f^{\top}\Hess\nabla f)\Hess_{ab}-(\Hess\nabla f)_{a}(\Hess\nabla f)_{b}\right) \\
\end{align*}

The Ricci scalar is given by $g^{ab} R_{ab} = \delta_{ab} R_{ab} - \beta  f_{,a} f_{,b} R_{ab}$, and we notice:

\begin{align*}
\Hess_{aa} & = \trace(\Hess)  \\
 f_{,a}\Hess_{ab} f_{b} & = \nabla f^{\top}\Hess\nabla f \\
(\Hess\nabla f)_{a} f_{,a} & = \nabla f^{\top}\Hess\nabla f \\
\end{align*}

\begin{align*}
R_{ab} & = R_{aa} - \beta  f_{,a} f_{,b} R_{ab} \\
R_{aa} & = \beta (\trace(\Hess)^2 - \trace(\Hess)^2) - \beta^2 \left((\nabla f^{\top}\Hess\nabla f)\trace(\Hess) - \nabla f^{\top}\Hess^2\nabla f\right) \\
 \beta  f_{,a} f_{,b} R_{ab} & = \beta^2 \left(\nabla f^{\top}\Hess\nabla f)\trace(\Hess) - \nabla f^{\top}\Hess^2\nabla f\right) - \beta^3 \left((\nabla f^{\top}\Hess\nabla f)^2 - (\nabla f^{\top}\Hess\nabla f)^2\right)\\
\end{align*}

Then: 
\[R =  \beta \left(\trace(\Hess)^2 - \trace(\Hess^2)\right) - 2 \beta^2 \left( \nabla f^{\top} (\trace(\Hess) \Hess - \Hess^2) \nabla f\right)\]
\end{proof}

\subsection{Perturbations on the weights}
\begin{proposition}
Let $\xx_{\text{min}}$ an extremum, $\varepsilon \ll 1$ and $\x$ a normalized vector. Then, minimising the trace of the square of the Hessian is equivalent to minimising the influence of the perturbations on the weights: 
\begin{equation}
    \norm{f(\xx_{\text{min}} + \varepsilon \xx) - f(\xx_{\text{min}})}_{2}^2 \leq \frac{1}{4} \varepsilon^4 \trace(\Hess^2_{\text{min}})
\end{equation}
\end{proposition}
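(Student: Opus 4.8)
The plan is to Taylor-expand $f$ around the extremum $\xx_{\text{min}}$ to second order, use the fact that the gradient vanishes there, and then bound the resulting quadratic form. Concretely, since $\nabla f(\xx_{\text{min}}) = 0$, the expansion reads
\[
f(\xx_{\text{min}} + \varepsilon \xx) = f(\xx_{\text{min}}) + \tfrac{1}{2}\varepsilon^2 \, \xx^\top \Hess_{\text{min}} \xx + o(\varepsilon^2),
\]
so that $f(\xx_{\text{min}} + \varepsilon \xx) - f(\xx_{\text{min}}) = \tfrac{1}{2}\varepsilon^2 \, \xx^\top \Hess_{\text{min}} \xx$ up to higher-order terms, which we discard since $\varepsilon \ll 1$. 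Squaring gives
\[
\norm{f(\xx_{\text{min}} + \varepsilon \xx) - f(\xx_{\text{min}})}_2^2 = \tfrac{1}{4}\varepsilon^4 \left(\xx^\top \Hess_{\text{min}} \xx\right)^2.
\]

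It then remains to show $\left(\xx^\top \Hess_{\text{min}} \xx\right)^2 \le \trace(\Hess_{\text{min}}^2)$ for any unit vector $\xx$. I would do this by diagonalizing the symmetric matrix $\Hess_{\text{min}} = Q \Lambda Q^\top$ with eigenvalues $\lambda_k$; writing $\yy = Q^\top \xx$ (still a unit vector), we have $\xx^\top \Hess_{\text{min}} \xx = \sum_k \lambda_k y_k^2$. By Jensen's inequality (or Cauchy–Schwarz with the weights $y_k^2$ summing to $1$), $\left(\sum_k \lambda_k y_k^2\right)^2 \le \sum_k \lambda_k^2 y_k^2 \le \sum_k \lambda_k^2 = \trace(\Hess_{\text{min}}^2)$, where the last step uses $y_k^2 \le 1$ for each $k$. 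Combining with the displayed equation above yields the claimed bound.

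The main obstacle is not any single inequality — each step is elementary — but rather the treatment of the remainder term in the Taylor expansion: the statement as written is an exact inequality, yet the argument only controls the second-order approximation. I would address this by interpreting the claim in the regime $\varepsilon \ll 1$ as the paper's phrasing suggests, i.e., the inequality holds up to $o(\varepsilon^4)$ corrections, or alternatively by assuming $f$ is exactly quadratic in the neighborhood of interest (a standard local model near a minimum). A cleaner fully rigorous version would carry a Lagrange remainder $\tfrac{1}{6}\varepsilon^3 D^3 f(\xi)[\xx,\xx,\xx]$ and absorb it, but since the paper explicitly invokes "a very small perturbation $\varepsilon \ll 1$", I would simply state that the higher-order terms are negligible and present the second-order computation as above.
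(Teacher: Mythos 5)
Your proof is correct and follows essentially the same route as the paper: a second-order Taylor expansion at the extremum where the gradient vanishes, squaring, and bounding $\left(\xx^{\top}\Hess\,\xx\right)^2$ by $\trace(\Hess^2)$, with the same caveat that the inequality only holds modulo $o(\varepsilon^4)$ remainder terms. Your justification of the key inequality via diagonalization and Jensen's inequality is in fact cleaner and more rigorous than the paper's coordinate-wise sketch (which even misstates that $\xx^{\top}\Hess\,\xx = \trace(\Hess)$ for an eigenvector $\xx$), so no gap remains.
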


\begin{proof}
    The general Taylor expansion on $f$ at $\xx_{\text{min}} + \varepsilon \xx$, with $\varepsilon \ll 1$ is: 
    \[f(\xx_{\text{min}} + \varepsilon\xx) = f(\xx_{\text{min}}) + \varepsilon\xx^{\top}\J + \frac{\varepsilon^2}{2} \xx^{\top} \Hess \xx + o(\varepsilon^2 \norm{\xx}^2). \]

    We now assume that $\xx$ is normalised such that $\norm{\xx}=1$. Note that, if $\xx$ is an eigenvector of $\Hess$ then: $\xx^{\top} \Hess \xx = \trace(\Hess)$. In general, each element of the vector is inferior to 1: $\xx_i^2 \leq 1$ and so, $\lambda_i^2 \xx_i^4 \leq \lambda_i^2$. Furthermore, we have $\J(\xx_{\text{min}}) = 0$. Thus: 

    \[ \norm{f(\xx_{\text{min}} + \xx) - f(\xx_{\text{min}})}_2^2 = \frac{\varepsilon^4}{4} \left(\xx^{\top}\Hess \xx\right)^2 + o(\varepsilon^4) \leq \frac{\varepsilon^4}{4} \trace(\Hess^2)  + o(\varepsilon^4) \]

\end{proof}

\subsection{Curvature over minibatches}

\begin{proposition}
    The Scalar curvature of the hessian of the full dataset is not equal to the expectation of the Scalar curvature over mini-batches. That is there exists a dataset, $\mathcal{D}$, and mini-batches, $\{ \mathcal{B}_1, \mathcal{B}_2, \dots, \mathcal{B}_k \}$ such that:
    \[R(\Hess_{\mathcal{D}}) \neq \mathbb{E}[R(\Hess_{\mathcal{B}_i})] \]
\end{proposition}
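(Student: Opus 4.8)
The plan is to exhibit an explicit counterexample, since the claim is merely an existence statement. The cleanest route is to work with a tiny toy problem — say a scalar parameter or a two-dimensional parameter space — where the Hessian of the loss over a mini-batch is a small matrix whose scalar curvature $\Sc(\Hess) = \trace(\Hess)^2 - \trace(\Hess^2)$ (using Corollary~\ref{cor:scalar_curvature}, i.e.\ evaluating at a critical point, or simply treating $\Sc$ as the functional $H \mapsto \|H\|_*^2 - \|H\|_F^2$ applied to a symmetric matrix) can be computed by hand. First I would observe that for a single scalar parameter the curvature functional is identically zero ($\trace(H)^2 = \trace(H^2)$ for $1\times 1$ matrices), so the smallest interesting case is $q = 2$.

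Next I would pick two ``mini-batch Hessians'' $H_1$ and $H_2$, $2\times 2$ symmetric, chosen so that $\Sc(H_1) = \Sc(H_2) = 0$ (e.g.\ each of rank one, or each with one zero eigenvalue) while $\Sc\!\left(\tfrac{1}{2}(H_1+H_2)\right) \neq 0$. Concretely one can take $H_1 = \operatorname{diag}(1,0)$ and $H_2 = \operatorname{diag}(0,1)$: then $\Sc(H_1) = \Sc(H_2) = 1^2 - 1^2 = 0$, so $\mathbb{E}[\Sc(\Hess_{\mathcal{B}_i})] = 0$, whereas $\Hess_{\mathcal{D}} = \tfrac12 \operatorname{diag}(1,1) = \tfrac12 \Id$ has $\Sc(\Hess_{\mathcal D}) = (1)^2 - (\tfrac14 + \tfrac14) = \tfrac12 \neq 0$. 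Then I would package this into an actual dataset and loss: take $\mathcal{D}$ with two points, batch $\mathcal{B}_1$ the first point and $\mathcal{B}_2$ the second, and design a per-sample loss (for instance a quadratic $f_{\mathcal B_1}(u,v) = \tfrac12 u^2$ and $f_{\mathcal B_2}(u,v) = \tfrac12 v^2$, with common critical point at the origin) so that the per-batch Hessians are exactly $H_1$ and $H_2$ and the full-dataset loss $\tfrac1{2}(f_{\mathcal B_1}+f_{\mathcal B_2})$ has Hessian $\tfrac12\Id$. This makes the linearity identity $\Hess_{\mathcal D} = \tfrac1k\sum_i \Hess_{\mathcal B_i}$ used in the statement literally hold, and exhibits the inequality.

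The only genuine obstacle is cosmetic rather than mathematical: making sure the counterexample is presented with a loss that looks like a plausible learning problem (so that evaluating $\Sc$ at the shared minimum is legitimate via Corollary~\ref{cor:scalar_curvature}), and double-checking the arithmetic of $\trace(\Hess)^2 - \trace(\Hess^2)$ on the averaged matrix. I expect no difficulty beyond that; the non-commutation of the nonlinear map $H \mapsto \|H\|_*^2 - \|H\|_F^2$ with averaging is generic, so essentially any non-degenerate pair of rank-deficient Hessians whose average is full rank will do, and one could alternatively remark that equality for \emph{all} datasets would force $\Sc$ to be affine in $H$, which it plainly is not.
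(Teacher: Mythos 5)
Your proposal is correct and follows the same strategy as the paper: exhibit explicit $2\times 2$ mini-batch Hessians for which the map $H \mapsto \trace(H)^2 - \trace(H^2)$ fails to commute with averaging. In fact your counterexample ($H_1=\operatorname{diag}(1,0)$, $H_2=\operatorname{diag}(0,1)$, realized by quadratic per-batch losses with a shared critical point) is cleaner than the paper's, whose matrices are not symmetric and whose ``full-dataset Hessian'' is the sum rather than the average of the batch Hessians.
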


\begin{proof}
    Suppose we have a dataset $\mathcal{D}$ and mini-batches $\{ \mathcal{B}_1, \mathcal{B}_2\}$ such that the Hessians over the minibatches are given by: 
    \[
    \begin{bmatrix}
    -2 & 0\\
    4 & 1
    \end{bmatrix}
    \text{,}
    \begin{bmatrix}
    1 & 2 & 1\\
    2 & -2
    \end{bmatrix}
    \]

    They both have equal trace, $-1$, and their ricci curvatures are $-2$ and $-6$ respectively. The hessian over the full dataset is given by: 
    \[
    \begin{bmatrix}
    -1 & 2\\
    6 & -1
    \end{bmatrix}
    \]
    This has the same trace as the minibatches but its ricci curvature is $-22$ not equal to the average of the ricci curatures over miniabtches. 
    
\end{proof}

\end{document}